\title{Spectral State Space Models}
\author{%
  Naman Agarwal \\
  Google Deepmind\\
  \texttt{namanagarwal@google.com} \\
  \And
  Daniel Suo \\
  Google Deepmind\\
  \And
  Xinyi Chen \\
  Princeton University \\
  Google Deepmind\\
  \And
  Elad Hazan \\
  Princeton University \\
  Google Deepmind\\
}
\def\reals{{\mathbb R}}
\newcommand{\eps}{\varepsilon}
\newcommand{\defeq}{\triangleq}
\def\regret{\ensuremath{\mbox{Regret}}}
\newcommand{\xmark}{\ding{55}}
\newtheorem{theorem}{Theorem}[section]
\newtheorem{lemma}[theorem]{Lemma}
\newtheorem{remark}[theorem]{Remark}
\def\A{{\mathcal A}}
\def\din{d_{\mathrm{in}}}
\def\dout{d_{\mathrm{out}}}
\begin{document}

\maketitle

\begin{abstract}
This paper studies sequence modeling for prediction tasks with long range dependencies. We propose a new formulation for state space models (SSMs) based on learning linear dynamical systems with the spectral filtering algorithm \cite{hazan2017learning}. This gives rise to a novel sequence prediction architecture we call a spectral state space model. 

Spectral state space models have two primary advantages. First, they have provable robustness properties as their performance depends on neither the spectrum of the underlying dynamics nor the dimensionality of the problem. Second, these models are constructed with fixed convolutional filters that do not require learning while still outperforming SSMs in both theory and practice.

The resulting models are evaluated on synthetic dynamical systems and long-range prediction tasks of various modalities. These evaluations support the theoretical benefits of spectral filtering for tasks requiring very long range memory.
\end{abstract}

\section{Introduction}

Handling long-range dependencies efficiently remains a core problem in sequence prediction/modelling. Recurrent Neural Networks (RNN) \cite{hopfield1982neural, rumelhart1985learning, elman1990finding} are a natural choice, but are notoriously hard to train; they often suffer from vanishing and exploding gradients \cite{bengio1994learning, pascanu2013difficulty} and despite techniques to mitigate the issue \cite{hochreiter1997long, cho2014learning, arjovsky2016unitary}, they are also hard to scale given the inherently sequential nature of their computation.

In recent years, transformer models \cite{vaswani2017attention} have become the staple of sequence modelling, achieving remarkable success across multiple domains \cite{brown2020language, dosovitskiy2020image, jumper2021highly}. Transformer models are naturally parallelizable and hence scale significantly better than RNNs. However, attention layers have memory/computation requirements that scale quadratically with context length. Many approximations have been proposed (see \cite{10.1145/3530811} for a recent survey). 

RNNs have seen a recent resurgence in the form of state space models (SSM) which have shown promise in modelling long sequences across varied modalities  \cite{gu2021efficiently,dao2022hungry, gupta2022diagonal, orvieto2023resurrecting, poli2023hyena, gu2023mamba}. SSMs use linear dynamical systems (LDS) to model the sequence-to sequence transform by evolving the internal state of a dynamical system according to the dynamics equations
\begin{align*}
x_{t} = A x_{t-1} + B u_t \qquad y_{t} = C x_t + D u_t .
\end{align*}
Here $x_t \in \reals^d$ is the hidden state of the dynamical system, $u_t$ is the input to the system, and $y_t$ are observations. The matrices $A,B,C,D $ govern the evolution of the system and are called system matrices. Despite its simplicity, this linear model can capture a rich set of natural dynamical systems in engineering and the physical sciences due to the potentially large number of hidden dimensions. Linear dynamical systems are also attractive as a sequence model because their structure is amenable to both fast inference and fast training via parallel scans \cite{blelloch1989scans, smith2023simplified} or convolutions \cite{gu2021efficiently}. A rich literature stemming from control theory and recent machine learning interest has given rise to efficient techniques for system identification, filtering, and prediction for linear dynamical systems. For a survey of recent literature see  \cite{hazan2022introduction}. These techniques make SSMs attractive for sequence tasks which inherently depend on long contexts that scale poorly for transformers. Examples include large language models \cite{dao2022hungry}, modelling time series \cite{zhang2023effectively}, and audio generation \cite{goel2022s}. To understand the factors affecting the \textit{memory} in an SSM or simply a linear dynamical system, we now proceed to delineate how past states and inputs affect the future. 
\paragraph{Geometric decay in LDS.}
The linear equations governing the dynamics are recursive in nature, and imply that in a noiseless environment, the $t$'th output can be written as
\begin{align*}
    y_{t} = C x_t + D u_t &= C (A x_{t-1} +B u_{t}) + D u_t = ... = \sum_{i=0}^{t-1} C A^i  B u_{t-i} + D u_t
\end{align*} 
The matrix $A$ is asymmetric in general, and can have complex eigenvalues. If the amplitude of these eigenvalues is $>1$, then the output $y_t$ can grow without bounds. This is called an ``explosive" system. In a well-behaved system, the eigenvalues of $A$ have magnitude $<1$. If the magnitudes are bounded away from $1$, say $|\lambda_i (A)| < 1- \delta$, for some $\delta > 0$ (referred to as spectral gap), then we can write
$$ y_{t} = \sum_{i=0}^k C A^i  B u_{t-i} +  \omega_k \ \ , \ \   \| \omega_k \| \leq \eps $$ 
for $k = O(\frac{1}{\delta}\log \frac{1}{\eps}) $. This mathematical fact implies that the \textbf{effective memory} of the system is on the order of $\frac{1}{\delta}$. In general, the parameter $\delta$ is unknown apriori and can get arbitrarily small as we approach systems with have long range dependencies leading to instability in training linear dynamical systems with a long context. This issue is specifically highlighted in the work of \cite{orvieto2023resurrecting} who observe that on long range tasks learning an LDS directly does not succeed and requires interventions such as stable exponential parameterizations and specific normalization which have been repeatedly used either implicitly or explicitly in the SSM literature \cite{gu2021efficiently}. Unfortunately these reparametrizations and normalizations come with no theoretical guarantees. In fact this limitation is generally known to be fundamental to the use of linear dynamical systems, and can only be circumvented via a significant increase in sample complexity \cite{ghai2020no} or via control over the input sequence \cite{simchowitz18a}. 

\paragraph{Spectral filtering for linear dynamical systems.}

A notable deviation from the standard theory of linear dynamical systems that allows efficient learning in the presence of arbitrarily long memory is the technique of spectral filtering \cite{hazan2017learning}. The idea is to project the sequence of inputs to a small subspace that is constructed using special structure of discrete LDS where successive powers of the system matrix appear in the impulse response function. The basic idea is to represent the output as 
$$ {y}_{t} = \sum_{j=1}^k M_{j} \left( \sum_{i} \phi_j(i) \cdot {u}_{t-i} \right) , $$ 
where $\phi_j$ are \textit{spectral filters} which are sequence-length sized vectors that given the target sequence length can be computed offline, and $M_j$ are matrices parameterizing the model. These spectral-filters are the eigenvectors of the matrix constructed as the average of outer products of the discrete impulse-response functions, viz $Z = \int_{0}^1[1, \alpha, \alpha^2 ...][1, \alpha, \alpha^2 ...]^{\top} d\alpha$. It is shown that this matrix is inherently low-dimensional and for all $\alpha \in [0, 1]$, vectors of the form $[1 , \alpha, \alpha^2 \ldots ]$ are well approximated by the top-eigenspace of Z. Figure \ref{fig:filterbank} depicts these filters. For the details of how these filters are derived and their computation, see Section \ref{sec:prelims}.

\paragraph{Why is spectral filtering important?}
The main advantage of spectral filtering is that for certain types of linear dynamical systems, in particular those with symmetric matrices $A$, the \textit{effective memory}(measured by the number of filters) required to represent an observation at any point in the sequence in the spectral basis is \textbf{independent of the spectral gap parameter $\delta$!}. This guarantee indicates that if we featurize the input into the spectral basis, we can potentially design models that are capable of efficiently and stably representing systems with extremely long memory even with $\delta \rightarrow 0$. This striking fact motivates our derivation of the recurrent spectral architecture, and is the underlying justification for the performance and training stability gains we see in experiments.

\begin{figure}[H]
    \centering
    \includegraphics[width=0.6\linewidth]{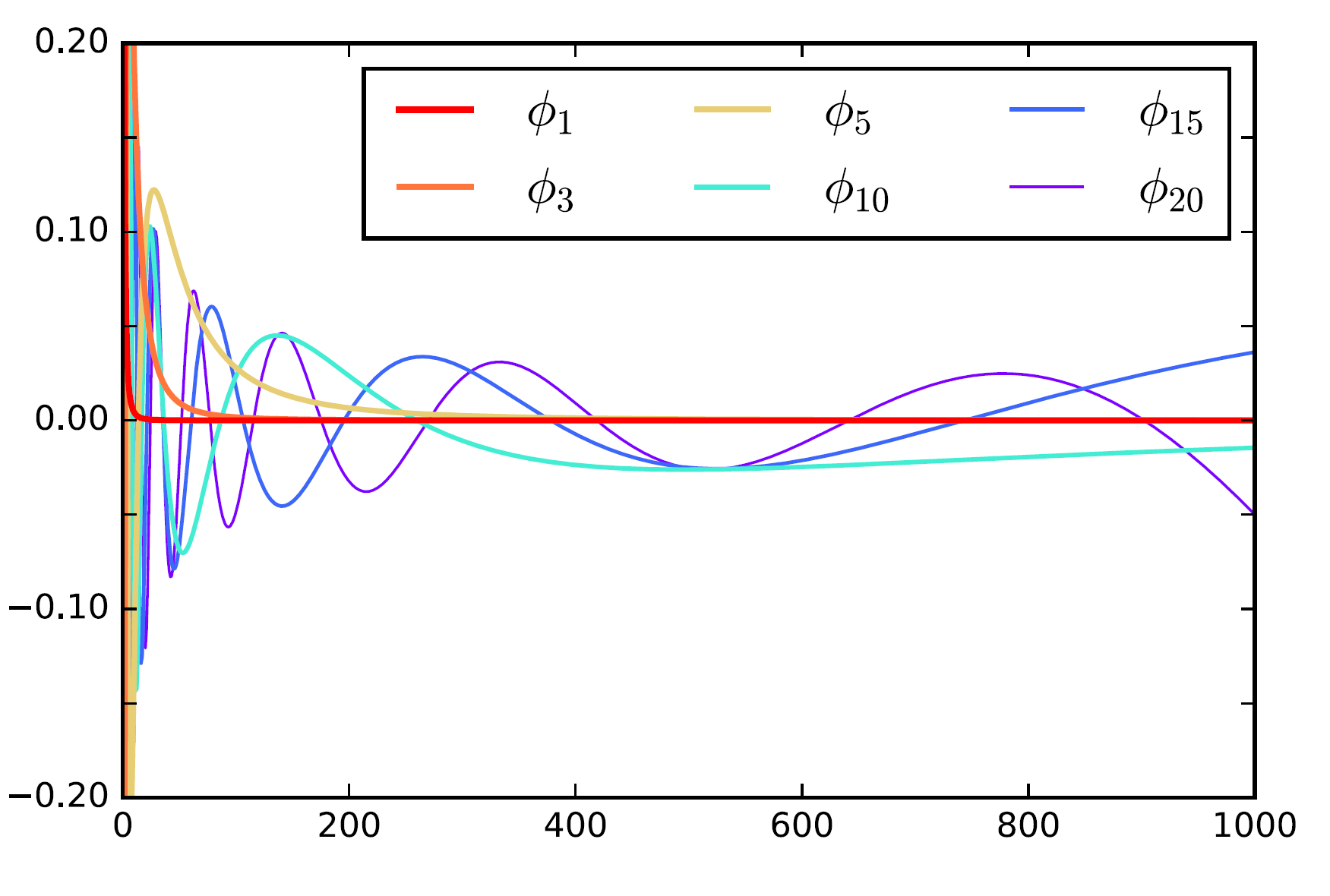}
    \caption{Spectral Filters used by the Spectral Filtering Algorithm. The x-axis is the time domain.}
    \label{fig:filterbank}
\end{figure}

\subsection{Our Contributions}
\label{sec:contributions}
We start by proposing state space models with learned components that apply spectral filtering for their featurization. We consider two types of spectral filters, which augment the original spectral filters proposed in \cite{hazan2017learning} with negative eigenvalues in two different ways. Our main contribution is a neural architecture that is based on these spectral state space models. This neural architecture can be applied recursively in layers, resulting in an expressive architecture for modeling sequential data. 

Finally we implement this neural architecture and apply it towards synthetically generated data as well as the Long Range Arena benchmark \cite{tay2021long}. We demonstrate that spectral state space models can stably and more efficiently learn on sequence modelling tasks with long range dependencies without the need for exponential parameterizations, particular initializations and normalizations. 

\paragraph{Main Advantages of Spectral SSM.}

Previously proposed convolutional models for sequence modeling, surveyed in the related work section, learn the kernels from the data. The kernels used in Spectral SSM are {\bf theoretically-founded and fixed} and thus parameter-free. In addition, our models are \textbf{provably as expressive} as an LDS. In particular, their expressiveness neither depends on the spectra gap nor on the dimension of the system, which are necessary in all other methods. 

\subsection{Related work}

Due to limited space, we provide a short overview of the most related work to us below and provide a detailed report on the related work in the appendix (Section \ref{sec:app-related-work}).

\paragraph{State space models.} SSMs for learning long range phenomenon have received much attention in the deep learning community in recent years starting with the works  \cite{NEURIPS2020hippo},\cite{gu2021combining} which propose and develop the HiPPO theory.  \cite{gu2021efficiently} develop the S4 parameterization to address the bottlenecks of training efficiency, performance and numberical stability. The S4 parameterization restricts the system matrices $A$ to be normal plus low-rank, allowing for stable diagonalization. The S4 model was further streamlined in later works, viz. using diagonal system matrices without a loss in performance \cite{gupta2022diagonal} and the S5 model \cite{smith2023simplified} which uses a MIMO diagonal system and associative scans for computational efficiency. \cite{orvieto2023resurrecting} investigate whether simpler deep Linear Recurrent Units (LRU) can recover the performance of deep SSMs, and provide an affirmative answer under the crucial caveat that specific modifications on linear RNNs, namely the stable exponential parameterization, $\gamma$- normalization and ring initialization, are necessary to learn on certain challenging long-context modeling tasks. We discuss the details of this ablation in the appendix (Section \ref{sec:LRU_ablation_detail}). 

\paragraph{Spectral filtering.} The technique of spectral filtering  \cite{hazan2017learning} was developed as a convex improper learning alternative to directly parameterizing an LDS (as in the case of SSMs) leading to an efficient, polynomial-time algorithm and near-optimal regret guarantees. Different from regression-based methods (eg. SSMs) that aim to identify the system dynamics, spectral filtering's guarantee does not depend on the stability of the underlying system, and is the first method to obtain condition number-free regret guarantees for the MIMO setting. Extension to asymmetric dynamical systems was further studied in \cite{hazan2018spectral}.

\paragraph{Convolutional Models for Sequence Modeling.}

Exploiting the connnection between LDS and convolutions \cite{gu2021efficiently}, various convolutional models have been proposed for sequence modelling. \cite{fu2023simple} employ direct learning of convolutional kernels but find that they underperform SSMs, identifying non-smoothness of kernels to be the culprit and propose applying explicit smoothing and squashing operations. \cite{li2022makes} identifies two key characteristics of convolutions to be crucial for long range modelling, decay in filters and small number of parameters parameterizing the kernel. \cite{pmlr-v202-shi23f} propose a multiresolution kernel structure inspired from the wavelet transform and multiresolution analysis.

All these methods parameterize the kernels with specific structures and/or add further regularizations to emulate the convolution kernels implied by SSMs. In contrast our proposed kernels are \textit{fixed and thereby parameter-free} and the number of parameters scale in the number of kernels and not the size of the kernel. Furthermore our kernels are \textit{provably more expressive} than linear dynamical systems capable of directly capturing and improving the performance of SSMs without the need for specific initializations. Naturally our kernels (see Fig \ref{fig:filterbank}) by default satisfy both the smoothness and the decay condition identified (and explicitly enforced) by \cite{li2022makes} and \cite{fu2023simple}.

\section{Preliminaries}
\label{sec:prelims}
\paragraph{Sequence prediction.} We treat sequence prediction as a game between a predictor/learner and nature in which iteratively at every time $t \in [L]$, the learner is presented an input $u_t \in \reals^{\din}$.  The learner $A$ then produces a candidate output  $\hat{y}_t = \hat{y}_t(A)$ , and nature reveals the $t^{th}$ element of a target sequence $y_t \in \reals^{\dout}$. The learner then suffers an instantaneous loss of $\|y_t - \hat{y}_t\|^2 $. The task of the learner is to minimize regret  over a benchmark set of learning algorithms $\A$, defined as follows
\[\regret = \sum_{t=1}^{L} \|y_t - \hat{y}_t\|^2 - \min_{A \in \A} \sum_{t=1}^{L} \|y_t - \hat{y}_t(A)\|^2.\]
\paragraph{Linear Dynamical Systems (LDS):} An example benchmark set of methods is that of a linear dynamical system, which has four matrix parameters,  $A \in \reals^{N \times N}, B \in \reals^{N \times \din}, C \in \reals^{\dout \times N}, D \in \reals^{\dout \times \din}$. The system evolves and generates outputs according to the following equations
\begin{align}
x_{t} &\defeq Ax_{t-1} + Bu_t,  \qquad \hat{y}_t \defeq Cx_t + Du_t \label{eqn:LDS}
\end{align}
Thus, an example class of benchmark algorithms $\A$ are all predictors that generate $\hat{y}_t$  according to these rules, for a fixed set of matrices $A,B,C,D$. 

\paragraph{Spectral Filtering:} Another important set of predictors is one which is inspired by spectral filtering \cite{hazan2017learning}. The spectral filtering theory builds an efficient representation for all vectors in the range of the function $\mu: [0,1] \rightarrow \reals^L$ defined as $ \mu(\alpha) \defeq (\alpha - 1) [1, \alpha, \alpha^2 \ldots ]$. To build this representation, for any $L$ define the following Hankel matrix $Z \in \reals^{L \times L}$ whose entries are given by 

\[ Z[i,j] \defeq \frac{2}{(i+j)^3 - (i+j)}\]

It is shown in the appendix (see Lemma \ref{lem:hankel-entries}) that $Z = \int_{0}^1 \mu(\alpha) \mu(\alpha)^{\top} d\alpha$. Thus it can be seen that $Z$ is a real PSD Hankel matrix. It is known (see Lemma \ref{lem:hankel-decay} in the appendix) that real PSD Hankel matrices have an exponentially decaying spectrum. As a result, the crux of the spectral filtering theory, lies in showing that for all $\alpha \in [0,1]$ \footnote{in particular all $\alpha$ close to 1, representing marginally stable systems.}, the vector $\mu(\alpha)$ is approximately contained in the subspace spanned by the top eigenvectors of Z, making the subspace spanned by top-eigenvectors of Z a very efficient subspace to project the input into. This fact is formalized as Lemma \ref{lem:apprx-thm} in the appendix. We now use this intuition to describe the Spectral Filtering algorithm.

Since Z is a real PSD matrix, it admits a real spectral decomposition, and the (non-negative) eigenvalues can be easily ordered naturally by their value. Let 
$ \{(\sigma_j \in \reals, \phi_j \in \reals^{T})\}_{j=1}^{L}$ be the eigenvalue-eigenvector pairs of $Z$ ordered to satisfy $\sigma_1 \geq \sigma_2 \geq \ldots \geq \sigma_d$. We consider a fixed number $K$ of the above eigenvectors. Algorithms in the spectral filtering class generate $\hat{y}_t$ as follows. For each $k \in K$, we first featurize the input sequence by \textit{projecting} the input sequence until time $t$ on $\phi_k$, leading to a sequence $U_{t,k} \in  \reals^{\din}$ defined as 
$$ U_{t, k} = \sum_{i = 1}^t u_{t-i} \cdot \phi_{k}(i).$$
The spectral filtering class is further parameterized by matrices $M^u_1 \in \reals^{\dout \times \din}$, $M^u_2 \in \reals^{\dout \times \din}$ and a set of matrices $M^{\phi}_1,...,M^{\phi}_K \in \reals^{\dout \times \din}$. The output at time $t$ is then given by 
\begin{equation}
\label{eqn:SFbasic}
    \hat{y}_t = \hat{y}_{t-1} + M^{u}_1 u_{t} + M^{u}_2 u_{t-1} + \sum_{k = 1}^K M^{\phi}_k U_{t,k}.
\end{equation}
Note that given an input sequence $u_{1:L}$ for any $k$, the $\din \times T$ matrix $U_{1:L, k}$ can be efficiently computed via convolutions along the time dimension $L$ in total time $O(\din \cdot L \log(L))$. The following theorem (proved in \cite{hazan2017learning}) establishes that the spectral filtering class of predictors approximately contains bounded linear dynamical systems with positive semi-definite $A$. 

\begin{theorem}
\label{thm:hszthm}
Given any $A,B,C,D$ such that $A$ is a PSD matrix with $\|A\| \leq 1$ and given any numbers $K \in \mathbb{I}^+, a \in \reals^+$, there exists matrices $M^u_1, M^u_2, M^{\phi}_1,...,M^{\phi}_K$, such that for all $L$ and all sequences $u_{1:L}$ satisfying $\|u_t\| \leq a$ for all $t \in [L]$ the following holds. Let $y^{\mathrm{LDS}}_{1:L}$ be the sequence generated by execution of the LDS given by $A,B,C,D$ (via \eqref{eqn:LDS}) and $y^{\mathrm{SF}}_{1:L}$ be the sequence generated by Spectral Filtering (via \eqref{eqn:SFbasic}) using the matrices $M^u_1, M^u_2, M^{\phi}_1,...,M^{\phi}_K$. Then for all $t \in [L]$,
\[ \|y^{\mathrm{LDS}}_{t} - y^{\mathrm{SF}}_{t}\|^2 \leq c \cdot \|B\|_{\text{col}} \cdot  \|C\|_{\text{col}} \cdot L^{3} \cdot a \cdot e^{- \left( \frac{\pi^2}{4} \cdot \frac{K}{\log(L)} \right)}\]
where $c \leq 10^6$ is a universal constant and $\|B\|_{\text{col}}$, $\|C\|_{\text{col}}$ are the maximum column norm of the matrices B and C respectively.
\end{theorem}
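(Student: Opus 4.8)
The plan is to exhibit the matrices $M^u_1,M^u_2,M^\phi_1,\dots,M^\phi_K$ explicitly, show that with these choices the spectral-filtering predictor reproduces the LDS impulse response up to a residual that is entirely governed by how well the vectors $\mu(\alpha)$ are captured by the top-$K$ eigenspace of $Z$, and then bound that residual with Lemma \ref{lem:apprx-thm}. First I would unfold both recursions from zero initial state, so that $y^{\mathrm{LDS}}_t=\sum_{i=0}^{t-1}CA^iBu_{t-i}+Du_t$, and telescope to $y^{\mathrm{LDS}}_t-y^{\mathrm{LDS}}_{t-1}=(CB+D)u_t-Du_{t-1}+\sum_{i=1}^{t-1}CA^{i-1}(A-I)Bu_{t-i}$. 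Since $A$ is symmetric PSD with $\|A\|\le 1$, diagonalize $A=\sum_l\lambda_l\nu_l\nu_l^\top$ with $\lambda_l\in[0,1]$; then $CA^{i-1}(A-I)B=\sum_l\mu(\lambda_l)(i)\,(C\nu_l)(\nu_l^\top B)$, where $\mu(\alpha)(i)=(\alpha-1)\alpha^{i-1}$ is exactly the vector appearing in Lemma \ref{lem:apprx-thm}. Thus the memoryless part is matched exactly by taking $M^u_1=CB+D$ and $M^u_2=-D$, and the memoryful part is a sum of rank-one operators acting through the $\mu(\lambda_l)$'s.

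Next I would choose $M^\phi_k=\sum_l\langle\mu(\lambda_l),\phi_k\rangle\,(C\nu_l)(\nu_l^\top B)$. Substituting into \eqref{eqn:SFbasic}, the $k$-sum collapses the filters onto the orthogonal projector $\Pi_K$ onto $\mathrm{span}(\phi_1,\dots,\phi_K)$, giving $\sum_k M^\phi_k U_{t,k}=\sum_l (C\nu_l)(\nu_l^\top B)\sum_i [\Pi_K\mu(\lambda_l)](i)\,u_{t-i}$. Subtracting from the telescoped LDS, and using that the entries of $\mu(\lambda_l)$ at indices $\ge t$ never enter either expression, the per-step residual is $\delta_t:=(y^{\mathrm{LDS}}_t-y^{\mathrm{LDS}}_{t-1})-(y^{\mathrm{SF}}_t-y^{\mathrm{SF}}_{t-1})=\sum_l(C\nu_l)(\nu_l^\top B)\sum_{i=1}^{t-1} [(I-\Pi_K)\mu(\lambda_l)](i)\,u_{t-i}$.

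Then I would bound the residual and accumulate. Using $\|u_t\|\le a$ and Cauchy–Schwarz over the time index, the inner sum is at most $\sqrt{L}\,a\,\|(I-\Pi_K)\mu(\lambda_l)\|_2$; Lemma \ref{lem:apprx-thm} (which rests on the exponential spectral decay of the PSD Hankel matrix $Z$, Lemmas \ref{lem:hankel-entries} and \ref{lem:hankel-decay}) bounds $\|(I-\Pi_K)\mu(\alpha)\|_2$ uniformly over $\alpha\in[0,1]$ by $\mathrm{poly}(L)\cdot e^{-\Omega(K/\log L)}$. The remaining sum $\sum_l\|C\nu_l\|\,\|\nu_l^\top B\|$ is controlled via the orthonormality of the $\nu_l$, which collapses it into column norms of $B$ and $C$. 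Finally $y^{\mathrm{LDS}}_t-y^{\mathrm{SF}}_t=\sum_{s\le t}\delta_s$, so $\|y^{\mathrm{LDS}}_t-y^{\mathrm{SF}}_t\|\le L\max_s\|\delta_s\|$; squaring and folding the polynomial-in-$L$ slack into the constant $c$ and into the exponent gives the stated bound $c\,\|B\|_{\mathrm{col}}\|C\|_{\mathrm{col}}\,L^3\,a\,e^{-(\pi^2/4)(K/\log L)}$.

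I expect the main obstacle to be the uniform (pointwise-in-$\alpha$) approximation estimate used in Step 3: the identity $\sigma_k=\int_0^1\langle\phi_k,\mu(\beta)\rangle^2\,d\beta$ only controls the \emph{average} over $\beta$ of $\langle\phi_k,\mu(\beta)\rangle^2$, so upgrading it to a \emph{pointwise} bound requires exploiting the smoothness of $\alpha\mapsto\mu(\alpha)$ (polynomial bounds on its derivative, quadrature/Sobolev-type arguments) together with the sharp decay rate of the spectrum of Hilbert-like Hankel matrices, which is precisely what pins down the constant $\pi^2/4$ in the exponent. A secondary, purely bookkeeping obstacle is ensuring that the $B,C$ dependence really collapses to column norms rather than to operator or Frobenius norms carrying dimension factors, and that the powers of $L$ accumulated from the telescoping, the Cauchy–Schwarz step, and the polynomial slack in Lemma \ref{lem:apprx-thm} total exactly $L^3$.
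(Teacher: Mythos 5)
Your construction---diagonalizing $A$, telescoping $y_t-y_{t-1}$, matching the memoryless part with $M^u_1=CB+D$, $M^u_2=-D$, choosing $M^\phi_k$ as the filter projections of $\mu(\lambda_l)$ against the rank-one factors $(C\nu_l)(\nu_l^\top B)$, and bounding the residual via Cauchy--Schwarz over time, Lemma \ref{lem:apprx-thm}, and the spectral decay of Lemma \ref{lem:hankel-decay} before accumulating over $t$---is exactly the argument the paper gives in Section \ref{sec:proofs} for the more general symmetric-case Theorem \ref{thm:RepTheoremMain}, specialized to PSD $A$ (one-step rather than two-step telescoping, no negative filters, no $\sigma_k^{1/4}$ scaling). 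The loose ends you flag (whether the sum over $l$ truly collapses to column norms without a hidden-dimension factor, and the exact bookkeeping of powers of $L$ and of the exponent after squaring) are present to the same degree in the paper's own derivation, so your proposal is correct at the same level of rigor and takes essentially the same route.
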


We do not provide a proof for this theorem which can be found in \cite{hazan2017learning} \footnote{Note that \cite{hazan2017learning} consider a simpler setting where in the ground truth $y_t$ is available to the learner for all future time steps. We do not make such an assumption and theorems have been adjusted to suffer an additional $L$ factor in the error as a result.}. Instead, in the next section we provide a generalization of this theory to cover all symmetric matrices and not just PSD matrices and prove a more general theorem (Theorem \ref{thm:RepTheoremMain}). We further build upon this generalization to create a sequence to sequence prediction unit. 

\section{Spectral Transform Unit (STU)}

In this section we use Spectral Filtering to create a sequence to sequence neural network layer, i.e. given an input sequence $\{u_1 \ldots u_L\} \in \reals^{\din}$, it produces an output sequence $\{y_1 \ldots y_L\} \in \reals^{\dout}$.

\begin{figure*}[t]
    \centering
    \includegraphics[width=.7\linewidth]{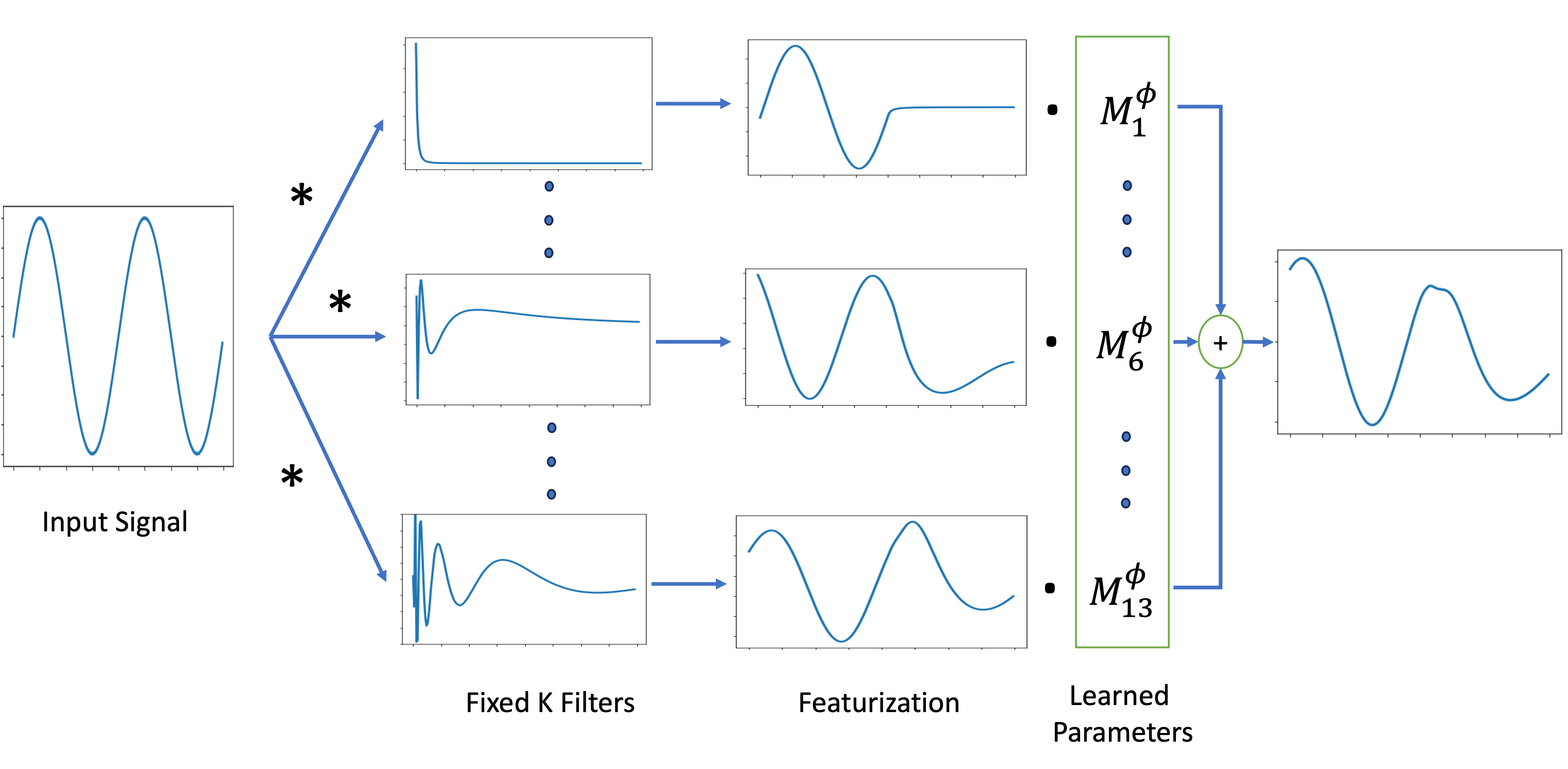}
    \caption{Schematic showing the spectral projection of a 1-dimensional input sequence and how these features are used to produce the spectral component in the STU output \eqref{eqn:SFmain}. In the multi-dimensional case the operation is applied in parallel across every input dimension.}
        \label{fig:single-layer-STU}
\end{figure*}

A single layer of STU (depicted in Figure \ref{fig:single-layer-STU}) is parameterized by a number $K$, denoting the number of eigenfactors and matrices $M^{\phi+}_1 \ldots M^{\phi+}_K, M^{\phi-}_1 \ldots M^{\phi-}_K \in \reals^{\dout \times \din},$ and $M^u_{1}, M^u_{2}, M^{u}_{3} \in \reals^{\dout \times \din}$. The matrices form the \textit{params} of the layer. Further recall the Hankel matrix $Z \in \reals^{L \times L}$ whose entries are given by 
\begin{equation}
\label{eqn:Hankel_def}
 Z[i,j] \defeq \frac{2}{(i+j)^3 - (i+j)}.  
\end{equation}
and let $ \{(\sigma_j \in \reals, \phi_j \in \reals^{L})\}_{j=1}^{L}$ be the eigenvalue-eigenvector pairs of $Z$ ordered to satisfy $\sigma_1 \geq \sigma_2 \geq \ldots \sigma_d$. Given an input sequence $\{u_1 \ldots u_L\} \in \reals^{\din}$, we first featurize the input sequence as follows. For any $t,k$, we begin by \textit{projecting} the input sequence till time $t$ on \textit{fixed} filters $\phi_k$, leading to two feature vectors $U^+_{t,k}, U^-_{t,k} \in \reals^{\din}$ defined as 
$$ U_{t, k}^+ = \sum_{i = 0}^{t-1} u_{t-i} \cdot \phi_{k}(i) \qquad U_{t, k}^- = \sum_{i = 0}^{t-1} u_{t-i} \cdot (-1)^{i} \cdot \phi_{k}(i).$$
Note that for every $k$, the sequence of features $U_{1:L,k}$ can be computed efficiently via convolution. The output sequence $\{y_1 \cdots y_L\}$ is then given by 
\newcommand*\mystrut[1]{\vrule width0pt height0pt depth#1\relax}
\begin{align}
        \hat{y}_t &= \underbrace{\mystrut{3.2ex} \hat{y}_{t-2} + \sum_{i=1}^{3} M^{u}_{i} u_{t+1-i}}_{\mathrm{Auto-regressive\;Component}} + \underbrace{\sum_{k = 1}^K M^{\phi+}_k \sigma_k^{1/4} U^+_{t-2,k} + \sum_{k = 1}^K M^{\phi-}_k \sigma_k^{1/4} U^-_{t-2,k}}_{\mathrm{Spectral\;Component}}.         
        \label{eqn:SFmain}
\end{align}
The above output contains a small auto-regressive component that essentially allows for stable learning of the spectral component as the memory grows. The differences from the original spectral filtering class \eqref{eqn:SFbasic} are the introduction of a \textit{negative} part in the spectral component and the slight change in the auto-regressive component. Both of these changes are necessitated by the requirement to capture negative eigenvalues of $A$. Note that \eqref{eqn:SFmain} corresponds to the specification of the algorithm presented in \cite{hazan2018spectral}, when the eigenvalues are known to be real numbers. For completeness and ease of discourse we prove the following representation theorem in the Appendix which shows that the above class approximately contains any marginally-stable LDS with symmetric $A$.\footnote{We discovered some small but easily fixable errors in the original proof of \cite{hazan2017learning} which we have corrected in our proof}  
\begin{theorem}
\label{thm:RepTheoremMain}
Given any $A,B,C,D$ such that $A$ is a symmetric matrix with $\|A\| \leq 1$ and given any numbers $K \in \mathbb{I}^+, a \in \reals^+$, there exists matrices $M^u_1, M^u_2, M^u_3,  M^{\phi+}_1 \ldots M^{\phi+}_K, M^{\phi-}_1 \ldots M^{\phi-}_K \in \reals^{\dout \times \din}$, such that for all $L$ and all sequences $u_{1:L}$ satisfying $\|u_t\| \leq a$ for all $t \in [L]$ the following holds. Let $y^{\mathrm{LDS}}_{1:L}$ be the sequence generated by execution of the LDS given by $A,B,C,D$ (via \eqref{eqn:LDS}) and $y^{\mathrm{SF}}_{1:L}$ be the sequence generated by Spectral Filtering (via \eqref{eqn:SFmain}) using the matrices $M^u_1, M^u_2, M^u_3,  M^{\phi+}_1 \ldots M^{\phi+}_K, M^{\phi-}_1 \ldots M^{\phi-}_K$. Then for all $t \in [T]$, we have that 
\[ \|y^{\mathrm{LDS}}_{t} - y^{\mathrm{SF}}_{t}\|^2 \leq c \cdot \|B\|_{\text{col}} \cdot  \|C\|_{\text{col}} \cdot L^{3} \cdot a \cdot e^{- \left( \frac{\pi^2}{4} \cdot \frac{K}{\log(L)} \right)}\]
where $c \leq 2 \times 10^6$ is a universal constant and $\|B\|_{\text{col}}$, $\|C\|_{\text{col}}$ are the maximum column norm of the matrices B and C respectively.
\end{theorem}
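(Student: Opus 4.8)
I would follow the template of \cite{hazan2017learning,hazan2018spectral}, adapted to the two-sided filter bank and the two-step autoregressive term in \eqref{eqn:SFmain}. First I would unroll \eqref{eqn:LDS} from rest ($x_0=0$) to get $y^{\mathrm{LDS}}_t = Du_t + \sum_{i=0}^{t-1}CA^iB\,u_{t-i}$, and diagonalize the symmetric matrix $A = \sum_\ell \lambda_\ell v_\ell v_\ell^\top$ with all $\lambda_\ell\in[-1,1]$. Rather than matching $y^{\mathrm{LDS}}_t$ directly, I would work with the \emph{two-step difference}: reindexing shows that the coefficient of $u_{t-j}$ in $y^{\mathrm{LDS}}_t - y^{\mathrm{LDS}}_{t-2}$ equals $D+CB$ for $j=0$, $CAB$ for $j=1$, and $C(A^2-I)A^{j-2}B$ for $j\ge 2$. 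Comparing against \eqref{eqn:SFmain}, i.e.\ $\hat y_t - \hat y_{t-2} = \sum_{i=1}^3 M^u_i u_{t+1-i} + (\text{features built from }U^\pm_{t-2,\cdot})$, I would set $M^u_1 = D+CB$, $M^u_2 = CAB$, $M^u_3 = -D$ (which additionally makes $t=1,2$ exact, so there is no base-case error), leaving only the requirement that $\sum_{k=1}^K\sigma_k^{1/4}\phi_k(i)\big(M^{\phi+}_k+(-1)^iM^{\phi-}_k\big)\approx C(A^2-I)A^iB$ for $i=0,\dots,L-3$.

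The heart of the argument is the elementary identity that for $\lambda\in[-1,1]$ and $i\ge 0$, with $\beta=|\lambda|$ and recalling $\mu(\alpha)=(\alpha-1)[1,\alpha,\alpha^2,\dots]$,
\[
(\lambda^2-1)\lambda^i \;=\; \begin{cases} (1+\beta)\,\mu(\beta)[i], & \lambda\in[0,1],\\[2pt] (-1)^i(1+\beta)\,\mu(\beta)[i], & \lambda\in[-1,0), \end{cases}
\]
since $\lambda^2-1=(\beta-1)(\beta+1)$ and $\lambda^i=(-1)^i\beta^i$ when $\lambda<0$. This is precisely why the architecture needs the two-step shift (the factor $\lambda^2-1$ supplies the ``$(\alpha-1)$'' of $\mu$) together with the negated filters $U^-$ (which carry the parity $(-1)^i$). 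Invoking Lemma~\ref{lem:apprx-thm} — which, via the exponential Hankel spectral decay of $Z$ (Lemma~\ref{lem:hankel-decay}, \eqref{eqn:Hankel_def}), says that every $\mu(\alpha)$ with $\alpha\in[0,1]$ is well approximated by its projection $\widehat\mu_K(\alpha)=\sum_{k\le K}\langle\mu(\alpha),\phi_k\rangle\phi_k$ — I would define the spectral matrices by functional calculus: $M^{\phi+}_k = \sigma_k^{-1/4}\,C\,g^+_k(A)\,B$ and $M^{\phi-}_k = \sigma_k^{-1/4}\,C\,g^-_k(A)\,B$, where $g^+_k(\lambda)=(1+\lambda)\langle\mu(\lambda),\phi_k\rangle$ on $[0,1]$ (and $0$ elsewhere) and $g^-_k(\lambda)=(1+|\lambda|)\langle\mu(|\lambda|),\phi_k\rangle$ on $[-1,0)$ (and $0$ elsewhere). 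With this choice the $\sigma_k^{\pm1/4}$ cancel, the $i$-th coefficient of the spectral component becomes exactly the eigenvalue-wise approximant of $C(A^2-I)A^iB$, and the identity above bounds its operator-norm error by $2\|B\|_{\mathrm{op}}\|C\|_{\mathrm{op}}\sup_{\alpha\in[0,1]}|\mu(\alpha)[i]-\widehat\mu_K(\alpha)[i]|$.

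Finally I would assemble the global error by iterating $\hat y_t - y^{\mathrm{LDS}}_t = (\hat y_{t-2}-y^{\mathrm{LDS}}_{t-2})+\varepsilon_t$ over the $\le\lceil t/2\rceil\le L$ indices congruent to $t$ mod $2$ (no base term, by Stage 1), writing $\varepsilon_t=\sum_{j\ge2}(\text{coefficient error}_{j-2})\,u_{t-j}$, and combining $\|u_\tau\|\le a$ with the per-coefficient bound and Lemma~\ref{lem:apprx-thm}'s estimate $\sup_\alpha\|\mu(\alpha)-\widehat\mu_K(\alpha)\|\le\mathrm{poly}(L)\,e^{-\Theta(K/\log L)}$ (this is where the $\pi^2/4$ constant of the Hankel spectrum enters). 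One factor $L$ comes from the outer iteration and one from the inner sum; converting $\|\cdot\|_{\mathrm{op}}$ to $\|\cdot\|_{\mathrm{col}}$, squaring, and absorbing numerical constants then gives the claimed $\|y^{\mathrm{LDS}}_t-y^{\mathrm{SF}}_t\|^2 \le c\,\|B\|_{\mathrm{col}}\|C\|_{\mathrm{col}}\,L^3\,a\,e^{-\pi^2 K/(4\log L)}$.

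The main obstacle is the uniform-in-$\alpha$ control of $|\mu(\alpha)[i]-\widehat\mu_K(\alpha)[i]|$ over all $\alpha\in[0,1]$ and all coordinates $i$, in particular for $\alpha\to 1$ (the marginally stable regime the theorem is really about) — this is the only place the condition-number-free nature of the guarantee is used, and everything else is linear algebra. A secondary subtlety is verifying that a single parity factor $(-1)^i$ simultaneously captures \emph{all} negative eigenvalues, which is exactly what forces the two-step (rather than one-step) autoregressive correction here, in contrast to \eqref{eqn:SFbasic} and Theorem~\ref{thm:hszthm}.
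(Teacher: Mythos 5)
Your proposal follows essentially the same route as the paper's proof: the two-step difference $y_t - y_{t-2}$, the identity $(\lambda^2-1)\lambda^i = \pm(1+|\lambda|)\mu(|\lambda|)[i]$ splitting positive and negative eigenvalues across the $U^+/U^-$ filter banks, the same choices $M^u_1 = CB+D$, $M^u_2 = CAB$, $M^u_3 = -D$ and projection-based $M^{\phi\pm}_k$, and the same error accumulation via Lemmas \ref{lem:apprx-thm} and \ref{lem:hankel-decay} over the $O(L)$ recursion steps. The only cosmetic difference is that you express the spectral matrices via functional calculus $C\,g^{\pm}_k(A)\,B$ where the paper writes the equivalent explicit sum over eigenvalue--eigenvector pairs $\sum_l (|\alpha_l|+1)(\mu(|\alpha_l|)^{\top}\phi_k)\sigma_k^{-1/4}(c_l\otimes b_l)$.
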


The above theorem in particular ensures for any sequence length $L$ that setting $K = O\left(\log(L)\log\left(\frac{\|B\|_{\text{col}} \cdot  \|C\|_{\text{col}} \cdot L \cdot a}{\epsilon}\right)\right)$ we get there exists a spectral filtering model with $K$ filters that can approximate any LDS upto an error of $\epsilon$. Note that the requirement on the number of filters grows logarithmically in $L$, highlighting the efficiency of the representation. The proof of the above theorem is provided in the appendix along with an alternative version of spectral filtering using slightly modified filters which also provide the same guarantee. 
\begin{remark}
Comparing Theorem \ref{thm:RepTheoremMain} (our contribution) and Theorem \ref{thm:hszthm} (Theorem 1 from \cite{hazan2017learning}), we note firstly that our theorem holds for symmetric matrices and not just PSD matrices. \cite{hazan2017learning} allude to a direct extension for the symmetric case which we believe is not fully correct. We use a similar idea to prove this theorem. Secondly a minor difference is that in the sequential prediction setting the prediction is \textit{auto-regressive}, i.e. uses its own $y$ to make the future predictions.
\end{remark}

Due to space limitations, we discuss the runtime scaling of our method and compare it with different methods in the appendix (Section \ref{sec:app-related-work}).


    
         

\subsection{Experiment: Learning a marginally-stable LDS}
We provide a simple synthetic evaluation of the stability and training efficiency afforded by the STU. We consider a low-dimensional linear system $A, B, C, D$ generated as follows. $B \in \reals^{4 \times 3},C \in \reals^{3 \times 4}$ are matrices with iid unit Gaussian entries. $D$ is a diagonal matrix with iid unit Gaussian entries and $A$ is a diagonal matrix with $A_{ii} \sim 0.9999 * Z$ where $Z$ is a random sign. By design this is a system with a very high stability constant ($\sim 10^4$). As a training dataset we generated $\{(u_i, y_i)\}$ where $u_i$ is a random input sequence and $y_i$ is the output generated by applying the linear dynamical system on $u_i$. We perform mini-batch (batch size 1) training with the l2 loss. As comparison we perform the same procedure with an LRU (Linear Recurrent Unit) layer as proposed by \cite{orvieto2023resurrecting} which directly parameterizes the linear system. The results of the training loss as seen by the two systems are presented in Figure \ref{fig:training_loss_synth_1}.

\begin{figure}
\centering
\begin{subfigure}[b]{.5\textwidth}
  \centering
  \includegraphics[width=\linewidth]{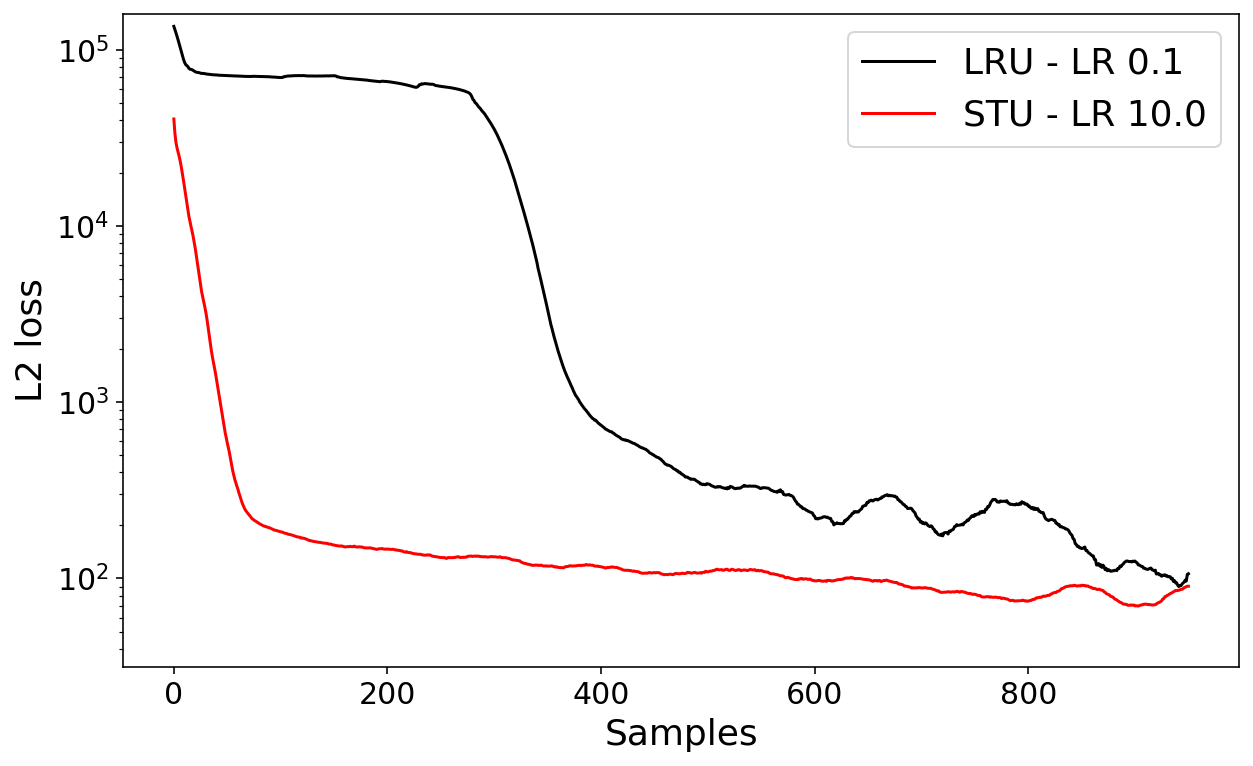}
  \caption{\label{fig:training_loss_synth_1}}
\end{subfigure}%
\begin{subfigure}[b]{.5\textwidth}
  \centering
  \includegraphics[width=\linewidth]{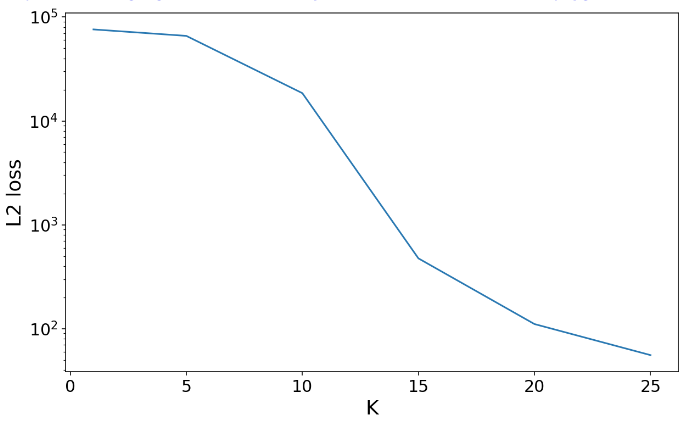}
  \caption{\label{fig:training_loss_synth_K}}
\end{subfigure}
\caption{Learning dynamics for learning a marginally stable LDS. (a.)(Smoothed) Learning curves for a single STU layer (red) vs a single LRU layer (black). The learning rate was tuned for both models. See Appendix for a detailed discussion of the tuning and sensitivity to hyperparameters for both the models. Curiously at stable LRs we observe that LRUs show a plateauing of learning. (b.) Error (in log-scale) obtained by the single STU layer as a function of the model parameter 'K'. We observe an exponential drop in the reconstruction loss as predicted by the analysis.}
\end{figure}

We use all the initialization/normalization techniques as recommended by \cite{orvieto2023resurrecting} for LRU including the \textit{stable exponential parameterization}, \textit{$\gamma$-normalization} and \textit{ring-initialization}. Indeed we find that all these tricks were necessary to learn this system at all. We provide more details about the ablations and other hyperparameter setups in the appendix. We observe that the STU is significantly more efficient at learning the LDS as opposed to the LRU. We further find that there is a wide range of LRs where the STU has a stable optimization trajectory and the loss decreases continuously highlighting the advantages of a convex parameterization. On the other hand, LRU is able to eventually learn the system at the right learning rates, it requires almost 8x the number of samples to get to a system with non-trivial accuracy. More details can be found in the appendix. Curiously we observe that for the LRU training plateaus completely for the first 50\% of training highlighting the difficulty of optimization via a non-convex landscape. 

The STU layer in the previous experiment employs $K=25$. In Figure \ref{fig:training_loss_synth_K} we plot the performance of STU at various levels of $K$. As predicted by the theory we observe an exponential decay in the error as $K$ increases with the error effectively plateauing after $K \geq 15$. 


\section{Stacked STU}

\begin{figure}
\centering
\begin{subfigure}[b]{.48\textwidth}
  \centering
\includegraphics[width=\linewidth]{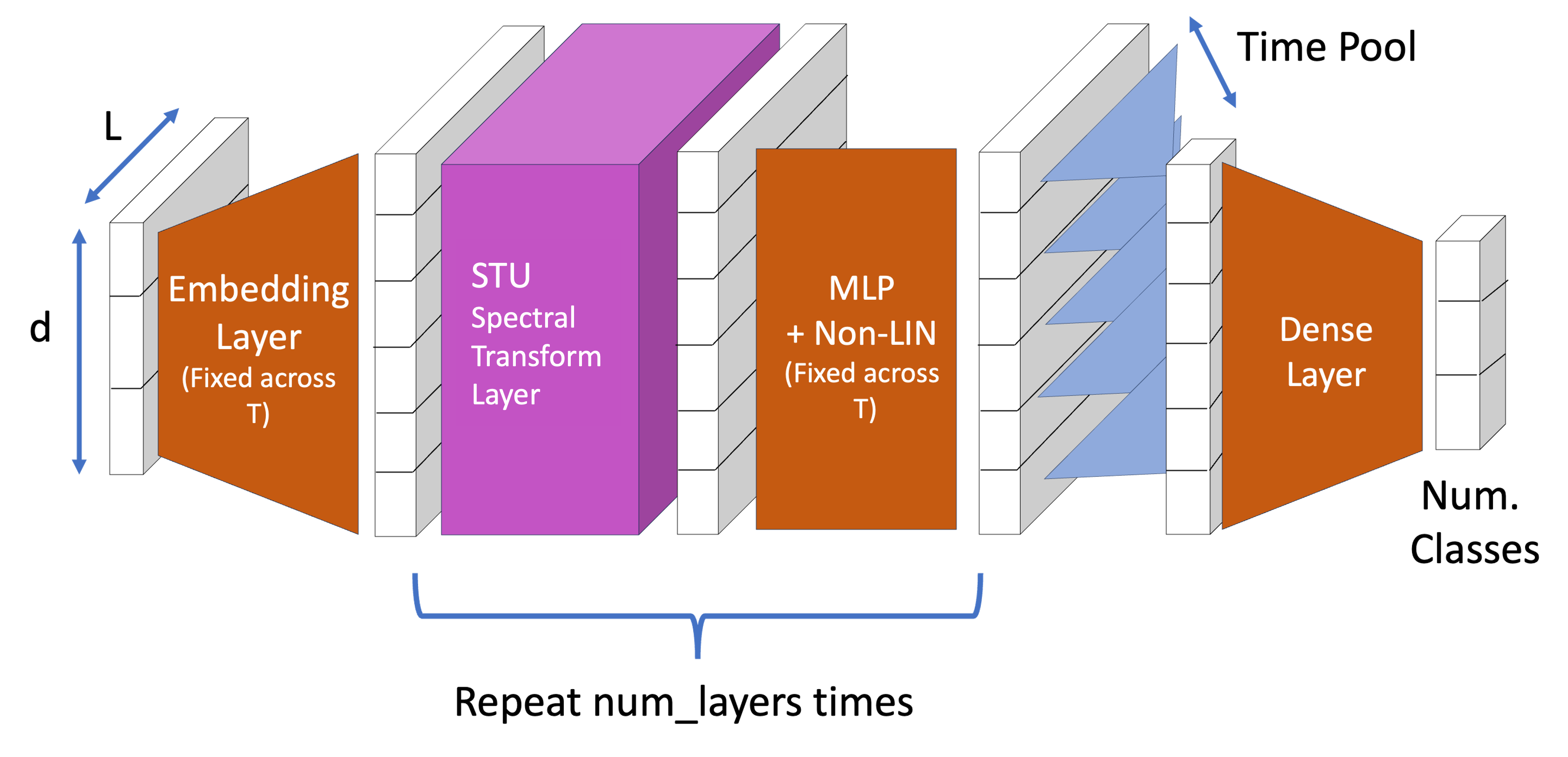}
    \caption{Schematic displaying a multi-layer STU model. \label{fig:multi_layer_STU}}
\end{subfigure}\hspace{.02\textwidth}%
\begin{subfigure}[b]{.48\textwidth}
\begin{center}
\resizebox{0.8\textwidth}{!}{%
\def\arraystretch{1.5}
\begin{tabular}{ |c|c|c|c| } 
 \hline
 \textbf{Model} & \textbf{Specification} & \textbf{Pathfinder} & \textbf{PathX}  \\
 \hline
 STU & Eqn \eqref{eqn:SFmain} (K=16) & 91.8 & 89.5 \\
 \hline 
 LRU & Dense A & \xmark & \xmark \\
 & $\Lambda$ Exp. Param. & 65.4 & \xmark \\
 & $\Lambda$ Stable Exp. & 93.5 & \xmark \\
 & + Ring Init. & 94.4 & \xmark \\
 & + $\gamma$-Norm. & 95.1 & 94.2 \\
  \hline
\end{tabular}}
\caption{Comparison of the basic stacked STU model against LRU ablations in \cite{orvieto2023resurrecting}}
\label{tab:basic-stu-stacked-path}
\end{center}
\end{subfigure}
\end{figure}

To increase the representation capacity and to maintain the efficiency of prediction through linear dynamical systems, proposed models in the SSM literature take the form of stacking these sequence to sequence transforms into multiple layers. Non-linearities in the model can then be introduced by sandwiching them as layers lying in between these sequence to sequence transforms. 

In this paper we closely follow the stacking approach followed by \cite{orvieto2023resurrecting}, replacing the LRU layers appropriately by STU layers. A schematic for the resultant multi-layer model is displayed in Figure \ref{fig:multi_layer_STU}. In a nutshell, the input sequence is first embedded via a time-invariant embedding function followed by multiple repetitions of alternating STU layers and non-linearities (in particular we use GLU). Finally the resulting output is time-pooled followed by a final readout layer according to the task at hand. This composite model can now be trained in a standard fashion via back-propagation and other commonly used deep-learning optimization techniques.



\subsection{Experiments on Long Range Arena \cite{tay2021long}}

We evaluate the stacked STU model on the Long Range Arena (LRA) benchmark \cite{tay2021long}. This benchmark aims to assess the performance of sequence prediction models in long-context scenarios and consists of six tasks of various modalities, including text and images. The context length for the tasks ranges from 1K to 16K, and the tasks require capabilities such as hierarchical reasoning,  matching and retrieval, and visual-spatial understanding. SSMs \cite{gu2021efficiently} have shown significantly superior performance on most of the tasks compared to Transformer architectures. In particular for the hardest task in the suite, PathX (image classification with context length of 16K), no transformer model has been able to achieve accuracy beyond random guessing. We provide the evaluation of the stacked STU model on the two hardest tasks namely PathFinder and PathX in Table \ref{tab:basic-stu-stacked-path}. 

We compare our performance against the ablation carried out by \cite{orvieto2023resurrecting} who find that ring initialization, stable exponential parameterization and $\gamma$-normalization are all crucial towards learning these tasks. In particular as reported by \cite{orvieto2023resurrecting} all three of the above interventions were necessary to learn on PathX to any non-trivial accuracy. This is a result of the much larger context length of 16K employed by the PathX task. On the other hand we find that the the stacked STU (with the STU component exactly as represented by \eqref{eqn:SFmain}) is sufficient to learn on both these tasks to relatively high accuracies. Notably we do not require any other normalizations or initialization techniques We initialize all the parameters of the STU i.e. M matrices to 0. Details about our implementation as well as details about the experiments including hyperparameters can be found in the appendix (Section \ref{sec:exp_details}. This result in particular confirms and highlights the theoretical stability afforded by the STU even under learning tasks involving large sequence lengths. In the appendix (Table \ref{tab:stu_perf_detailed} we provide the performance evalaution of the stacked STU on all tasks of the LRA benchmark.

In the next section we highlight a simple technique towards significantly improving the achieved accuracy for the stacked STU model. 

\section{Hybrid Temporal and Spectral Units}
\begin{table*}[t]
    \def\arraystretch{1.5}
\begin{center}
\begin{tabular}{ |c|c|c|c|c|c|c| } 
 \hline
 & \textbf{CIFAR} & \textbf{ListOps} & \textbf{Text} & \textbf{Retrieval} & \textbf{Pathfinder} & \textbf{PathX} \\
 \hline 
 S4 \cite{gu2021efficiently} & 88.65 &	59.60	& 86.82 & 90.90	&94.20&	96.35\\ 
 \hline 
 LRU \cite{orvieto2023resurrecting}	&89 	&60.2 	&89.4 	&89.9 	&95.1 &	94.2 	\\
 \hline
  \hline
   AR-STU &	\textbf{91.34} &	\textbf{61.14} &	\textbf{90.47} & 90.52 & \textbf{95.45} &	93.24 \\
   \hline
\end{tabular}
\end{center}
\caption{Comparison of the STU model against various proposed SSM models on the LRA benchmark. We report the median over 5 trials for our experiments.}
\label{tab:stu_perf}
\end{table*}
A simple extension to the STU model (Equation \eqref{eqn:SFmain}) is to parameterize the dependence of $y_{t}$ on $y_{t-2}$ with a parameter $M_y$, leading to the following prediction model 
\begin{align}
        \hat{y}_t &= \underbrace{\mystrut{3.2ex} \boldsymbol{M^y} \hat{y}_{t-2} + \sum_{i=1}^{3} M^{u}_{i} u_{t+1-i}}_{\mathrm{Auto-regressive\;Component}} + \underbrace{\sum_{k = 1}^K M^{\phi+}_k \sigma_k^{1/4} U^+_{t-2,k} + \sum_{k = 1}^K M^{\phi-}_k \sigma_k^{1/4} U^-_{t-2,k}}_{\mathrm{Spectral\;Component}}. \label{eqn:SFmain2}
\end{align}

Setting $M^y = I$ we recover the guarantees afforded by Theorem \ref{thm:RepTheoremMain} and thus the above model is strictly more powerful. We find that the above change leads to significant improvements over the accuracy achieved by the simple STU model. We can further extend the auto-regression to depend on multiple previous $y$ as opposed to just $y_{t-2}$. Indeed as the following theorem shows adding sufficiently long auto-regression is powerful enough to capture any LDS. 
\begin{theorem}
\label{thm:auto-reg}
    Given an LDS parameterized by $A \in \reals^{d \times d},B,C,D$, there exist coefficients $\alpha_{1:d}$ and matrices $\Gamma_{0:d}$ such that given any input sequence $u_{1:L}$, the output sequence $y_{1:L}$ generated by the action of the LDS on the input satisfies for all $t$
\[ y_t = \sum_{i=1}^d \alpha_i y_{t-i} + \sum_{i=0}^d \Gamma_i u_{t-i}\]
\end{theorem}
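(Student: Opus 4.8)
The plan is to read off the recurrence from the Cayley--Hamilton theorem applied to the system matrix $A$: the coefficients $\alpha_{1:d}$ will be (up to sign) the coefficients of the characteristic polynomial of $A$, and the matrices $\Gamma_{0:d}$ will appear as the finite set of impulse-response corrections that remain after the geometric tail has been ``folded in'' by that polynomial relation.

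First I would fix the standard convention that the system is initialized at $x_0 = 0$ and that $u_s = y_s = 0$ for $s \le 0$, so that unrolling \eqref{eqn:LDS} gives the exact impulse-response form $y_t = D u_t + \sum_{j \ge 0} C A^j B\, u_{t-j}$ for every $t \in [L]$ (the sum is effectively finite since $u_s = 0$ for $s \le 0$). Write the characteristic polynomial of $A$ as $\chi_A(\lambda) = \lambda^d - \sum_{i=1}^d \alpha_i \lambda^{d-i}$. Cayley--Hamilton gives $A^d = \sum_{i=1}^d \alpha_i A^{d-i}$, and left-multiplying by $A^{j-d}$ for any $j \ge d$ yields $A^j = \sum_{i=1}^d \alpha_i A^{j-i}$, hence $C A^j B = \sum_{i=1}^d \alpha_i C A^{j-i} B$ for all $j \ge d$.

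Next I would form the residual $r_t \defeq y_t - \sum_{i=1}^d \alpha_i y_{t-i}$, substitute the impulse-response form into each term, and collect the coefficient of $u_{t-m}$ for each $m \ge 0$. For $m \ge d+1$ every index involved is $\ge 1$, so this coefficient equals $C A^m B - \sum_{i=1}^d \alpha_i C A^{m-i} B = C\big(A^m - \sum_{i=1}^d \alpha_i A^{m-i}\big)B = 0$ by the displayed identity. Therefore only $m = 0, 1, \dots, d$ survive, and one reads off closed forms $\Gamma_m = (\text{coefficient of } u_{t-m} \text{ in } r_t)$, e.g.\ $\Gamma_0 = D + CB$ and $\Gamma_d = -\alpha_d D$ (the direct-feedthrough $D$ re-enters through the shift by $i=d$); crucially each $\Gamma_m$ depends only on $A,B,C,D$, not on $t$ or $L$. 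Equivalently, and perhaps more transparently, one can argue in the $z$-domain: $Y(z) = \big(C(zI-A)^{-1}B + D\big)U(z)$, and $(zI-A)^{-1} = \operatorname{adj}(zI-A)/\det(zI-A)$ exhibits the transfer function as $N(z)/p(z)$ with $p(z) = \det(zI-A)$ a monic scalar polynomial of degree $d$ and $N(z) = C\operatorname{adj}(zI-A)B + D\,p(z)$ a matrix polynomial of degree at most $d$; clearing denominators and inverting the transform gives exactly the claimed recurrence, with $\alpha_i$ read from $p$ and $\Gamma_i$ from $N$.

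The only step requiring genuine care is the bookkeeping at the start of the sequence --- checking that the recurrence still holds for small $t$, where $y_{t-i}$ reaches back past index $1$, and that the term $D u_t$ is correctly split between $\Gamma_0$ and the $m=d$ coefficient. Both are handled cleanly by the zero-padding convention adopted at the outset, so the argument reduces to a short, essentially one-line consequence of Cayley--Hamilton; I do not anticipate any real obstacle beyond this routine boundary and direct-term accounting.
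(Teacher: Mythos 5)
Your argument is correct and follows essentially the same route as the paper: unroll the LDS into its impulse-response form, apply Cayley--Hamilton to get $CA^jB=\sum_{i=1}^d \alpha_i CA^{j-i}B$ for $j\ge d$, and observe that in the residual $y_t-\sum_{i=1}^d\alpha_i y_{t-i}$ the coefficient of every $u_{t-m}$ with $m>d$ vanishes, leaving only the finitely many $\Gamma_0,\dots,\Gamma_d$. Your explicit coefficient bookkeeping (e.g.\ $\Gamma_0=CB+D$, $\Gamma_d=-\alpha_d D$) and the transfer-function remark are fine additions but do not change the substance relative to the paper's proof.
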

This is a well-known observation and we provide a proof in the appendix (Section \ref{sec:auto-reg}). Motivated by the above theorem we propose a generalization of STU, which we call AR-STU, to add auto-regression over the previously produced outputs. In particular given a parameter $k_y$ we define AR-STU as 
\begin{align}
        \hat{y}_t &= \underbrace{\mystrut{3.2ex} \sum_{i=1}^{k_y} \boldsymbol{M^y_i} \hat{y}_{t-i} + \sum_{i=1}^{3} M^{u}_{i} u_{t+1-i}}_{\mathrm{Auto-regressive\;Component}} + \underbrace{\sum_{k = 1}^K M^{\phi+}_k \sigma_k^{1/4} U^+_{t-2,k} + \sum_{k = 1}^K M^{\phi-}_k \sigma_k^{1/4} U^-_{t-2,k}}_{\mathrm{Spectral\;Component}}. \label{eqn:SFmain3}
\end{align}
In Table \ref{tab:stu_perf}, we evaluate the performance of AR-STU on Long Range Arena. In our experiments we search over two values of $k_y=\{2, 32\}$. For non-image tasks, ListOps, Text and Retrieval, we find that setting $k_y=2$ is sufficient to get optimal results. For the image tasks, CIFAR, Pathfinder and PathX, we found that $k_y=32$ led to significant performance gains. A performance ablation over this parameter can be found in the appendix (Table \ref{tab:stu_perf_detailed}). Overall we find that the STU model provides improvements over baselines such as S4 and LRU on 4 out of the 6 tasks and performs comparably to the best baseline on the others. Remarkably, the STU layers come with provable guarantees and thus performs well \textit{out of the box} without the need for specific initializations, discretizations or normalizations. We initialize all parameters $M^y_i, M_i^u, M_k^{\phi+}, M_k^{\phi-}$ with 0. We provide details of the experimental setup, including hyperparameter tuning in the appendix (Section \ref{sec:exp_details}).

\section{Conclusion}

Insprired by the success of SSMs, we present a new theoretically-founded deep neural network architecture, Spectral SSM, for sequence modelling based on the Spectral Filtering algorithm for learning Linear Dynamical Systems. The SSM performs a reparameterization of the LDS and is guaranteed to learn even marginally stable symmetric LDS stably and efficiently. We demonstrate the core advantages of the Spectal SSM, viz. robustness to long memory through experiments on a synthetic LDS and the Long Range Arena benchmark. We find that the Spectral SSM is able to learn even in the presence of large context lengths/memory without the need for designing specific initializations, discretizations or normalizations which were necessary for existing SSMs to learn in such settings. While spectral SSMs only model symmetric A, our presented set of experiments on the LRA benchmark suggest that the gap between symmetric and general A is potentially small in real world tasks. Indeed more recent SSM models like \cite{gu2023mamba, de2024griffin} work with real diagonals (i.e. symmetric case) as they do not find evidence that adding complex eigenvalues help. Spectral filtering has been extended in certain settings to asymmetric A \cite{hazan2018spectral} and a similar extension to our proposal  is straightforward but comes with efficiency losses and we leave it to future work.

\bibliographystyle{alpha}
\bibliography{main.bib}  


\appendix
\section{Detailed Related work}
\label{sec:app-related-work}
\paragraph{State space models.} SSMs for learning long range phenomenon have received much attention in the deep learning community in recent years. \cite{NEURIPS2020hippo} propose the HiPPO framework for continuous-time memorization, and shows that with a special class of system matrices $A$ (HiPPO matrices), SSMs have the capacity for long-range memory. Subsequently, \cite{gu2021combining} propose the Linear State-Space Layer (LSSL), where the system matrix is learnable. The LSSL can be viewed as a recurrence in the state domain and a convolution in the time domain, and generalizes particular RNN and CNN architectures. For efficient learning of the system matrices, authors propose learning within a class of structured matrices that contain the HiPPO dynamics, and have efficient convolution schemes. However, the proposed method is numerically unstable in practice as well as memory-intensive.  As a result, \cite{gu2021efficiently} develop the S4 parameterization to address these bottlenecks. The S4 parameterization restricts the system matrices $A$ to be normal plus low-rank, allowing for stable diagonalization of the dynamics. Under this parameterization, authors design memory and computationally efficient methods that are also numerically stable. 


The S4 model has been further streamlined in later works. \cite{gupta2022diagonal} simplify the S4 parameterization to diagonal system matrices, and shows that the diagonal state-space model (DSS) is competitive with S4 on several benchmarks. \cite{smith2023simplified} propose the S5 architecture, which improves upon S4 in two directions: 1) instead of having independent SISO SSMs in the feature dimension, S5 has one MIMO DSS that produces vector-valued outputs; 2) S5 uses efficient parallel scans in place of convolutions, bypassing custom-designed algorithms for computing the convolutional filters. 

To improve the performance of SSMs on language modeling tasks, \cite{dao2022hungry} develops the H3 layer by stacking two SSMs together. They identify two areas where SSMs underperform compared to the transformer: remembering earlier tokens and comparing tokens across the input sequence. The H3 layer includes a shift SSM, where the dynamics matrix is a shifting operator, and a DSS, with multiplicative interactions. The shift SSM enables the layer to store earlier tokens, while the multiplicative interaction allows for comparison (inner product) between tokens in a sequence. They also develop FFT algorithms with better hardware utilization, to close the speed gap between SSMs and Transformers.

Motivated by the similarities between SSMs and RNNs, \cite{orvieto2023resurrecting} investigate whether deep RNNs can recover the performance of deep SSMs, and provide an affirmative answer. The proposed RNN architecture is a deep model with stacked Linear Recurrent Unit (LRU) layers. Each LRU has linear recurrence specified by a complex diagonal matrix, learned with exponential parameterization and proper normalization techniques. The deep LRU architecture has comparable computational efficiency as SSMs and matches their performance on benchmarks that require long-term memory. However, the paper also shows that without the specific modifications on linear RNNS, namely the stable exponential parameterization, gamma normalization and ring initialization, LRU fails to learn on certain challenging long-context modeling tasks. We provide further details about this study after this section.

\paragraph{Spectral filtering.} The technique of spectral filtering for learning linear dynamical systems was put forth in \cite{hazan2017learning}. This work studies online prediction of the sequence of observations $y_t$,  and the goal is to predict as well as the best symmetric LDS using past inputs and observations. Directly learning the dynamics is a non-convex optimization problem, and spectral filtering is developed as an improper learning technique with an efficient, polynomial-time algorithm and near-optimal regret guarantees. Different from regression-based methods that aim to identify the system dynamics, spectral filtering's guarantee does not depend on the stability of the underlying system, and is the first method to obtain condition number-free regret guarantees for the MIMO setting. Extension to asymmetric dynamical systems was further studied in \cite{hazan2018spectral}. 


\paragraph{Convolutional Models for Sequence Modeling}

Exploiting the connnection between Linear dynamical systems and convolutions (as highlighted by \cite{gu2021efficiently}) various convolutional models have been proposed for sequence modelling. \cite{fu2023simple} employ direct learning of convolutional kernels directly to sequence modelling but find that they underperform SSMs. They find the non-smoothness of kernels to be the culprit and propose applying explicit smoothing and squashing operations to the kernels to match performance on the Long Range Arena benchmark. The proposed model still contains significantly large number of parameters growing with the sequence length. \cite{li2022makes} identifies two key characteristics of convolutions to be crucial for long range modelling, decay in filters and small number of parameters parameterizing the kernel. They achieve this via a specific form of the kernel derived by repeating and scaling the kernel in a dyadic fashion. \cite{pmlr-v202-shi23f} propose a multiresolution kernel structure inspired from the wavelet transform and multiresolution analysis.

All these methods parameterize the kernels with specific structures and/or add further regularizations to emulate the convolution kernels implied by SSMs. In contrast our proposed kernels are \textit{fixed and thereby parameter-free} and the number of parameters scale in the number of kernels and not the size of the kernel. Furthermore our kernels are \textit{provably more expressive} than linear dynamical systems capable of directly capturing and improving the performance of SSMs without the need for specific initializations. Naturally our kernels (see Fig \ref{fig:filterbank}) by default satisfy both the smoothness and the decay condition identified (and explicitly enforced) by \cite{li2022makes} and \cite{fu2023simple}.

\subsection{Ablations performed by \cite{orvieto2023resurrecting}}
\label{sec:LRU_ablation_detail}
Motivated by the success of SSMs, \cite{orvieto2023resurrecting} revisit the RNN model (under the same deep stacked structure as SSMs) to investigate their efficiency. They begin from a simple linear RNN (a directly parameterized LDS) and add multiple components inspired from the SSM literature to ensure numerical stability and trainability of the model especially as the sequences grow larger. Overall they demonstrate that carefully designed parameterizations and initializations of LDS parameters as well as specifically designed normalizations are all necessary for model to learn consistently over the LRA dataset and in particularly over the 16K context length task PathX. These interventions are driven by specific intuitions such as an inductive bias towards larger memory or controlling the loss blowup at initialization under long contexts but as such come with no theoretical guarantees towards alleviating the problem. We provide some quick details towards what these interventions are and refer the reader to \cite{orvieto2023resurrecting} to understand the motivations behind them and comparisons with similar ideas existing in previous SSM literature. The LRU model considered by \cite{orvieto2023resurrecting} is given by
\[ y_{k} = \text{diag}(\lambda) y_{k-1} + \gamma \odot B u_k.\]
In the above the learned parameters are $\lambda$ and B and note that $\text{diag}(\lambda)$ corresponds to a diagonal A. $\gamma$ is a specific normalization technique they develop to control the loss blowup under long-context detailed below. They perform the following interventions towards stable training
\begin{itemize}
    \item \textbf{Stable Exponential Parameterization}: They parameterize $\lambda$ as \[\lambda_j = \underbrace{\exp(-\exp(\nu^{\log}_j)}_{\text{magnitude}} + i \underbrace{\exp(\theta^{\log}_j)}_{\text{phase}})\]
    The above is done to ensure a bound on the magnitude of eigenvalues of the effective A matrix as well as to ensure more \textit{resolution} in the parameter space closer to the value of 1.
    \item \textbf{Ring Initialization}: They initialize the $\lambda_j$ in the complex annulus $[\text{min\_rad}, \text{max\_rad}]$. This ensures that at initialization the magnitude of $\lambda_j$ chosen randomly lies in $\in [\text{min\_rad}, \text{max\_rad}]$ and the phase is chosen randomly. When not applying this intervention min\_rad and max\_rad are chosen to be 0,1 respectively. When applying this intervention these values are chosen to be closer to 1, e.g. $0.9, 0.999$ respectively.
    \item \textbf{$\gamma$-Normalization}: They set $\gamma_j = \sqrt{1 - |\lambda_j|^2}$
    \item \textbf{Restricting Phase at initialization}: Instead of drawing a random phase at initialization the authors recommend selecting the initial phase from $[0, \pi/10]$. The authors claim that uniform phase inherently biases the network towards learning spurious features in the input sequence. 
\end{itemize}

\cite{orvieto2023resurrecting} provide the following ablation in the paper. In particular we see that all the above interventions are necessary to make the model get to non-trivial accuracy on PathX. On the contrary, as we show the STU model achieves comparable accuracy without requiring any specific initialization or normalization.

\begin{table}[H]
\begin{center}
\def\arraystretch{1.5}
\begin{tabular}{ |c|c|c|c|c|c| } 
 \hline
 \textbf{Model} & \textbf{Specification} & \textbf{sCIFAR} & \textbf{ListOps}& \textbf{Pathfinder} & \textbf{PathX}  \\
 \hline 
 LRU & Dense A & 72.2 & 50.4 & \xmark & \xmark \\
 & $\Lambda$ Exp. Param. & 85.4 & 60.5 & 65.4 & \xmark \\
 & $\Lambda$ Stable Exp. Param. & 87.2 & 59.4 & 93.5 & \xmark \\
 & + Ring Init. & 88.1 & 59.4 & 94.4 & \xmark \\
 & + $\gamma$-Norm. + Phase Init. & 89.0 & 60.2 & 95.1 & 94.2 \\
  \hline
\end{tabular}
\end{center}
\end{table}
\section{Computational complexity and comparison to other methods.} 
\label{sec:computational_complexity}
Using the STU method to make a sequence of $L$ predictions, the features $U^+,U^{-} \in \reals^{L \times \din \times K}$ can be computed  in time $O(K \cdot L \cdot \din  \log(L))$ using the Discrete Fast Fourier Transform, where $K$ is the number of filters and $L$ is the context length. The linear prediction part (i.e. spectral component) takes $O(K \cdot L \cdot \din \cdot \dout)$ time, and the autoregressive part can be implemented in total time $O( L \cdot \din \cdot \dout)$. Therefore the overall runtime is $O(K \cdot L \cdot \din \cdot (\log(L) + \dout))$. \footnote{We shortly note that the $K$ filters can be distributed amongst $K$ machines and their computations done separately. There are many other opportunities for distributed computing for all architectures which we will not survey here as it is out of scope. }

For comparison, consider LRU and transformers. The  same computation carried out by LRU w. diagonal system matrices is dominated by the hidden dimension, i.e. $O(L \cdot d_{\text{hidden}}  \cdot (\din + \dout))$. Thus, the number of filters is replaced by $d_{\text{hidden}}$, which is usually an order of magnitude larger, although STU has another $O(\log L)$ overhead. 

A transformer model with full attention runs in time $O(L^2 \din \dout)$, which is significantly more costly than both LRU and STU. This is consistent with the motivation of SSM as more efficient models for sequences. 
\section{Proof of Theorem \ref{thm:RepTheoremMain}}\label{sec:proofs}
We begin by observing that without loss of generality we can assume that $A$ is a real-diagonal matrix. This can be ensured by performing a spectral decomposition of $A = U \Sigma U^{\top}$ and \textit{absorbing} the $U, U^{\top}$ by redefining the system. Before continuing with the proof, we will provide some requisite definitions and lemmas. Define the following vector for any $\alpha \in \reals$, $\mu(\alpha) \in \reals^L$, with $\mu(\alpha)(i) = (\alpha - 1)\alpha^{i-1}$. Further define the Hankel matrix $H$ as 
\[ Z \defeq \int_{0}^{1} \mu(\alpha) \mu(\alpha)^{\top} d\alpha.\]
As the following lemma shows the Hankel matrix $Z$ above is the same Hankel matrix defined in the definition of STU \eqref{eqn:Hankel_def}.
\begin{lemma}
\label{lem:hankel-entries}
$Z$ is a Hankel matrix with entries given as 
\[Z(i,j) = \frac{2}{(i+j)^3 - (i+j)}\]
\end{lemma}

\begin{lemma}
\label{lem:mu-props}
We have that the following statements hold regarding $\mu(\alpha)$ for any $\alpha \in [0, 1]$,
\begin{itemize}
    \item $|\mu(\alpha)|^2 \leq  1$
    \item For any $\alpha \in [0,1]$ and any unit vector $v$ we have that 
    \[ (\mu(\alpha)^{\top}v)^2 \leq 12(v^{\top}Hv)\]
\end{itemize}
\end{lemma}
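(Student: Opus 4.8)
The plan is to prove the two claims about $\mu(\alpha)$ separately, both by elementary estimates.

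For the first claim, I would simply compute $|\mu(\alpha)|^2 = \sum_{i=1}^{L} (\alpha-1)^2 \alpha^{2(i-1)} = (1-\alpha)^2 \sum_{i=0}^{L-1} \alpha^{2i} \le (1-\alpha)^2 \cdot \frac{1}{1-\alpha^2} = \frac{(1-\alpha)^2}{(1-\alpha)(1+\alpha)} = \frac{1-\alpha}{1+\alpha} \le 1$ for $\alpha \in [0,1]$. (At $\alpha = 1$ the vector is zero, so the bound holds trivially; the geometric-series bound covers $\alpha \in [0,1)$.) This is a one-line calculation.

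The second claim is the substantive one. Here I would exploit the integral representation $Z = \int_0^1 \mu(\beta)\mu(\beta)^\top \, d\beta$, so that for any unit vector $v$, $v^\top Z v = \int_0^1 (\mu(\beta)^\top v)^2 \, d\beta$. The goal is then to show $(\mu(\alpha)^\top v)^2 \le 12 \int_0^1 (\mu(\beta)^\top v)^2 \, d\beta$ for every fixed $\alpha \in [0,1]$. Write $f(\beta) \defeq \mu(\beta)^\top v = \sum_{i=1}^L v_i (\beta-1)\beta^{i-1}$, a polynomial in $\beta$ of degree at most $L$. The key structural fact is that $f(\beta) = (\beta - 1) g(\beta)$ where $g(\beta) = \sum_{i=1}^L v_i \beta^{i-1}$, so $f(1) = 0$. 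I would argue that a polynomial vanishing at $\beta = 1$ cannot be much larger at a nearby point $\alpha$ than its average over $[0,1]$ — more precisely, I would control $|f(\alpha)|$ by relating pointwise values of $f$ to its $L^2([0,1])$ norm. The cleanest route is: since $f(1) = 0$, write $f(\alpha)^2 = \left(\int_\alpha^1 f'(\beta)\, d\beta\right)^2$; but a slicker approach that matches the constant $12$ is to use the substitution relating $\mu(\alpha)$ to the integral directly — note $\int_0^1 (\mu(\beta)^\top v)^2 d\beta$ dominates the contribution from $\beta$ near $\alpha$, and one shows $(\mu(\alpha)^\top v)^2$ is comparable to $\int_{\text{nbhd of }\alpha} (\mu(\beta)^\top v)^2 d\beta$ using that $f$ changes slowly relative to its value because of the $(\beta-1)$ factor and the structure of $g$.

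I expect the main obstacle to be extracting the explicit constant $12$: one needs the right quantitative comparison between a pointwise value of the polynomial $f(\beta) = (\beta-1)g(\beta)$ and its integral over $[0,1]$. The honest way is probably to bound $|f(\alpha)|$ via $|f(\alpha)| \le |f(\alpha) - f(\beta)| + |f(\beta)|$, integrate over $\beta$ in a well-chosen interval of length proportional to $1-\alpha$ (so that the $(\beta-1)$ prefactors at $\alpha$ and $\beta$ are comparable), use a Lipschitz-type estimate on $g$ via $|g'| $ bounds that follow from $\sum v_i^2 = 1$, and optimize the interval length. I would then absorb all numerical factors into the constant and check it does not exceed $12$. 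Depending on the bookkeeping, an alternative is to cite a known lemma on ratios of values to integrals of functions of the form $(\beta - 1)\beta^{i}$; but since the excerpt does not provide one, I will carry out the direct estimate, keeping the chain of inequalities loose enough to stay elementary while tight enough to land at the stated constant.
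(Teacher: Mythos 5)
Your first bullet is exactly the paper's argument (geometric series giving $\frac{1-\alpha}{1+\alpha}\le 1$, with the endpoint cases trivial), so there is nothing to add there.

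For the second bullet you start from the same place as the paper, namely $v^{\top}Zv=\int_0^1(\mu(\beta)^{\top}v)^2\,d\beta$, but the heart of your plan --- the quantitative comparison between the pointwise value and the average, with the explicit constant $12$ --- is precisely the step you leave unexecuted (``keeping the chain of inequalities loose enough \ldots tight enough to land at the stated constant''), and as sketched it has a real pitfall. Writing $f(\beta)=\mu(\beta)^{\top}v=(\beta-1)g(\beta)$, neither $g$ nor $f$ is uniformly Lipschitz on $[0,1]$: with $\|v\|=1$ one only gets $|g'(\beta)|\lesssim(1-\beta^2)^{-3/2}$ and hence $|f'(\beta)|\lesssim(1-\beta)^{-1/2}$, which blows up at $\beta=1$. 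Your device of integrating over an interval of length proportional to $1-\alpha$ can in principle absorb this blow-up, but then the interval length, the Cauchy--Schwarz step from $\int_I|f|$ to $\int_I f^2$, and the optimization over the interval all interact, and it is not at all clear the bookkeeping lands at $12$; so as written this is a proof plan with its decisive estimate missing. The paper avoids the issue by squaring first: it shows the function $h(\alpha)=(\mu(\alpha)^{\top}v)^2$ is globally $6$-Lipschitz, because $|h'|\le 2\|\mu(\alpha)\|\,\|\mu'(\alpha)\|$ and the bound $\|\mu(\alpha)\|^2\le 1-\alpha$ from your first bullet exactly cancels the $(1-\alpha)^{-1}$ blow-up in $\|\mu'(\alpha)\|^2$. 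It then uses the elementary fact that a nonnegative $M$-Lipschitz function on $[0,1]$ with maximum $R\le M$ has average at least $R^2/(2M)$, so $R^2/12\le\int_0^1 h=v^{\top}Zv$, which is the claim. If you want to complete your version, the cleanest fix is to adopt this squaring step rather than work with $f$ and $g$ directly.
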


\begin{lemma}
\label{lem:apprx-thm}
For any $\alpha \in [0,1]$, let $\tilde{\mu}(\alpha)$ be the projection of $\mu(\alpha)$ on the subspace spanned by top $k$ eigenvectors of $Z$, then we have that 
\[ \|\mu(\alpha) - \tilde{\mu}(\alpha)\|^2 \leq 12 \sum_{i=k+1}^L \sigma_i\]
\end{lemma}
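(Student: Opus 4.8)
The plan is to expand $\mu(\alpha)$ in the orthonormal eigenbasis of $Z$ and bound the tail coefficients one at a time using the second bullet of Lemma~\ref{lem:mu-props}. Let $\{(\sigma_i,\phi_i)\}_{i=1}^L$ be the eigenpairs of $Z$ with $\sigma_1 \ge \dots \ge \sigma_L$ and $\{\phi_i\}$ orthonormal. Writing $\mu(\alpha) = \sum_{i=1}^L c_i \phi_i$ with $c_i = \mu(\alpha)^\top \phi_i$, the projection onto the top-$k$ eigenspace is $\tilde\mu(\alpha) = \sum_{i=1}^k c_i \phi_i$, so by Parseval
\[
\|\mu(\alpha) - \tilde\mu(\alpha)\|^2 \;=\; \sum_{i=k+1}^L c_i^2 \;=\; \sum_{i=k+1}^L \bigl(\mu(\alpha)^\top \phi_i\bigr)^2 .
\]

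Next I would apply the second part of Lemma~\ref{lem:mu-props} with the unit vector $v = \phi_i$. Since $\phi_i$ is a unit eigenvector of $Z$ with eigenvalue $\sigma_i$, we have $\phi_i^\top Z \phi_i = \sigma_i$ (here $Z$ is the same Hankel matrix denoted $H$ in the statement of that lemma, by Lemma~\ref{lem:hankel-entries}), and hence $\bigl(\mu(\alpha)^\top \phi_i\bigr)^2 \le 12\,\sigma_i$ for every $i$ and every $\alpha \in [0,1]$. Summing this over $i = k+1, \dots, L$ and combining with the display above yields
\[
\|\mu(\alpha) - \tilde\mu(\alpha)\|^2 \;\le\; 12 \sum_{i=k+1}^L \sigma_i ,
\]
which is exactly the claimed bound.

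The deduction above is short and entirely mechanical once Lemma~\ref{lem:mu-props} is available, so the only real content — and therefore the main obstacle — lies upstream, in establishing the pointwise comparison $(\mu(\alpha)^\top v)^2 \le 12\,v^\top Z v$: this is where the integral representation $Z = \int_0^1 \mu(\alpha)\mu(\alpha)^\top\,d\alpha$ and a Cauchy–Schwarz–type estimate controlling $\mu(\alpha)$ uniformly in $\alpha$ must do the work, and where the universal constant $12$ is produced. The usefulness of the lemma for spectral filtering then additionally rests on the exponential decay of the $\sigma_i$ (Lemma~\ref{lem:hankel-decay}), which makes $\sum_{i=k+1}^L \sigma_i$ exponentially small in $k$, but that is invoked separately and not needed for the inequality itself.
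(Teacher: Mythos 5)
Your proof is correct and is exactly the argument the paper intends: the paper simply declares Lemma \ref{lem:apprx-thm} ``immediate from the second part of Lemma \ref{lem:mu-props},'' and your write-up supplies precisely that routine step --- expand $\mu(\alpha)$ in the orthonormal eigenbasis, use Parseval to identify the squared projection error with $\sum_{i=k+1}^L (\mu(\alpha)^{\top}\phi_i)^2$, and apply the bound $(\mu(\alpha)^{\top}\phi_i)^2 \leq 12\,\phi_i^{\top}Z\phi_i = 12\sigma_i$ termwise. Your reading of the $H$ versus $Z$ notation is also right; they denote the same Hankel matrix.
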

Finally the following lemma from \cite{hazan2017learning} shows that the spectrum of the matrix $Z$ decays exponentially. 
\begin{lemma}[Lemma E.3 \cite{hazan2017learning}]
\label{lem:hankel-decay}
Let $\sigma_j$ be the top $j^{th}$ eigenvalue of $Z$. Then we have that 
\[ \sigma_j \leq \Gamma c^{-j/\log(L)} \]
where $c = e^{\pi^2/4} \sim 11.79$ and $\Gamma = 235200$ is an absolute constant.
\end{lemma}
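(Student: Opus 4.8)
The plan is to establish exponential decay via the Eckart--Young characterization $\sigma_j = \min_{\mathrm{rank}(R)\le j-1}\|Z-R\|_{\mathrm{op}}$, exhibiting an explicit low-rank approximant of $Z$ built from the integral representation $Z = \int_0^1 \mu(\alpha)\mu(\alpha)^{\top}\,d\alpha$ of Lemma~\ref{lem:hankel-entries}. For any subspace $\mathcal{W}\subseteq\reals^L$ of dimension $m$ with orthogonal projector $\Pi$, the matrix $R = \int_0^1 (\Pi\mu(\alpha))(\Pi\mu(\alpha))^{\top}\,d\alpha$ has rank at most $m$, and since $\|\mu(\alpha)\|_2\le 1$ (Lemma~\ref{lem:mu-props}),
\[
\|Z-R\|_{\mathrm{op}} \;\le\; \int_0^1\big(\|\mu(\alpha)\|_2+\|\Pi\mu(\alpha)\|_2\big)\,\|\mu(\alpha)-\Pi\mu(\alpha)\|_2\,d\alpha \;\le\; 2\sup_{\alpha\in[0,1]}\mathrm{dist}\big(\mu(\alpha),\mathcal{W}\big).
\]
Hence it suffices to show that the curve $\{\mu(\alpha):\alpha\in[0,1]\}$ can be $\epsilon$-approximated by a subspace $\mathcal{W}$ of dimension $m = O\!\big(\log L\,\log(1/\epsilon)\big)$; inverting this relation between $m$ and $\epsilon$ produces a bound of the form $\sigma_j\le\Gamma\, c^{-j/\log L}$, and the entire content lies in making $c$ and $\Gamma$ sharp.

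To build $\mathcal{W}$ I would decompose $[0,1]$ dyadically towards $1$. The sliver $\alpha\in[1-1/L,1]$ is negligible, since there $\|\mu(\alpha)\|_2^2\le 1-\alpha\le 1/L$. On a dyadic interval $I_k$ on which $1-\alpha\asymp 2^{-k}$ (with $k$ ranging over $O(\log L)$ levels before reaching the sliver), geometric decay gives $\alpha^{i-1}\le\epsilon/L$ whenever $i\gtrsim 2^k\log(L/\epsilon)$, so $\mu(\alpha)$ for $\alpha\in I_k$ lies within $\epsilon$ of its first $n_k\asymp 2^k\log(L/\epsilon)$ coordinates; restricted to those coordinates, $\alpha\mapsto\mu(\alpha)$ is real-analytic and, over the short interval $I_k$, slowly varying, so a Chebyshev (polynomial-in-$\alpha$) approximant on $I_k$ of degree $d\asymp\log(L/\epsilon)$ matches it to within $\epsilon$. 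Taking $\mathcal{W}$ to be the span of the associated (shifted, zero-padded) Chebyshev coefficient vectors over all $O(\log L)$ dyadic levels gives $\dim\mathcal{W}=m\asymp\log L\,\log(L/\epsilon)$ with $\sup_\alpha\mathrm{dist}(\mu(\alpha),\mathcal{W})\lesssim\epsilon$; the stray polynomial-in-$L$ prefactors (from the $\ell_\infty\!\to\!\ell_2$ passage and the number of levels) only cost an additive $O(\log L)$ shift in $j$ and are absorbed. An alternative, algebraic route notes that $Z_{ij}=\tfrac{2}{(i+j-1)(i+j)(i+j+1)}$ is a second difference in $i+j$ of the Hilbert matrix $1/(i+j-1)$, hence satisfies a displacement/Sylvester equation with a low-rank right-hand side, and then invokes the Beckermann--Townsend bound of its singular values by the Zolotarev numbers of the two (well-separated, after a rational change of variable) spectra; this yields the same rate.

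I expect the main obstacle to be the constant optimization: upgrading the qualitative ``$\sigma_j\le\Gamma' e^{-\Theta(j/\log L)}$'' to the precise $c=e^{\pi^2/4}$ and $\Gamma=235200$. The $\pi^2/4$ surfaces from the sharp rate of polynomial approximation of analytic functions --- equivalently, the conformal modulus of the Bernstein ellipse, or the Zolotarev asymptotics --- so one must choose the ellipse parameter optimally and balance the dyadic truncation level against the per-level polynomial degree so that no factor of $L$ leaks into the exponent. This is exactly Lemma~E.3 of \cite{hazan2017learning}, which the present paper invokes rather than reproves; the remaining ingredients --- Eckart--Young, the integral representation, and the elementary bound $\|\mu(\alpha)\|_2\le1$ --- are routine.
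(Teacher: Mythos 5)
The paper does not prove Lemma \ref{lem:hankel-decay} at all: it is imported verbatim (as Lemma E.3) from \cite{hazan2017learning}, whose own argument rests on the Beckermann--Townsend bound for the singular values of real PSD Hankel matrices, obtained from a displacement (Sylvester) equation with low-rank right-hand side and Zolotarev-number estimates. So your ``alternative, algebraic route'' is essentially the actual proof in the cited source --- and note that for that route the integral representation $Z=\int_0^1\mu(\alpha)\mu(\alpha)^\top d\alpha$ and the bound $\|\mu(\alpha)\|\le 1$ are not even needed, since the PSD Hankel structure of $Z$ together with $\sigma_1(Z)\le\mathrm{tr}(Z)=O(1)$ already suffices; this is exactly how the main text motivates the lemma (``real PSD Hankel matrices have an exponentially decaying spectrum'').

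Your primary (dyadic-plus-Chebyshev) route, however, has a concrete gap in the bookkeeping of prefactors. As you set it up, a subspace of dimension $m\asymp\log L\cdot\log(L/\epsilon)$ gives $\sigma_{m+1}\lesssim\epsilon$; inverting at level $m=j-1$ yields $\epsilon\approx L\,e^{-j/\log L}$, i.e.\ a bound of the form $\sigma_j\lesssim L\,c^{-j/\log L}$ (and further $\mathrm{poly}(L)$ factors enter through the $\ell_\infty\to\ell_2$ passage on blocks of length up to $L$). A factor $L^a$ equals $c^{(a/\log c)\log L}$, so absorbing it costs a shift of $j$ by $\Theta\big((\log L)^2\big)$, not the ``additive $O(\log L)$ shift'' you claim; consequently this construction, as sketched, cannot deliver the lemma with an \emph{absolute} constant $\Gamma$ independent of $L$. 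Moreover the sharp rate $c=e^{\pi^2/4}$ (and the specific $\Gamma=235200$) is not derived but only conjectured to emerge from optimizing Bernstein-ellipse parameters, which you acknowledge. In short: the second route is the right one (and is what the cited lemma actually uses), while the first route proves only a qualitative $\sigma_j\le \mathrm{poly}(L)\,e^{-\Theta(j/\log L)}$ statement unless the prefactor issue is repaired.
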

We now move towards proving Theorem \ref{thm:RepTheoremMain}. Consider the following calculation for the LDS sequence $y_t^{\mathrm{LDS}}$
\begin{align*}
    y_t^{\mathrm{LDS}} = \sum_{i=0}^{T} CA^{i}B u_{t-i} + Du_{t}, 
\end{align*}
and therefore we have that
\begin{align*}
    y_t^{\mathrm{LDS}} - y_{t-2}^{\mathrm{LDS}} = (CB+D)u_t + CABu_{t-1} - Du_{t-2} + \underbrace{\sum_{i=0}^{T} C (A^{i+2} - A^i) B u_{t-2-i}}_{\mathrm{Term\;of\;Interest}}
\end{align*}
For any $t_1 \geq t_2$ we define the matrix $\bar{U}_{\{t_1:t_2\}} \in \reals^{\dout \times t_1 - t_2 + 1}$ whose $i^{th}$ column is the input vector $u_{t_1 - i + 1}$. We allow $t_2$ to be negative and by convention assume $u_{t} = 0$ for any $t \leq 0$. Denote the diagonal entries of $A$ by $\{\alpha_l\}_{l=1}^{d_h}$, i.e. $\alpha_l = A(l,l)$. Further let $b_l, c_l$ be the $l$-th column for the matrices $B,C$ respectively. The term of interest above can then be written as 
\begin{align*}
    &\sum_{i=0}^{L} C (A^{i+2} - A^i) B u_{t-2-i} \\
    &= \sum_{l=1}^{d_h} (c_l \otimes b_l) \left( \sum_{i=0}^{L} (\alpha_l^{i+2} - \alpha_l^i) u_{t-2-i} \right) \\
    &= \sum_{l: \alpha_l \geq 0} (c_l \otimes b_l) \left( \sum_{i=0}^{L} (\alpha_l^2 - 1)\alpha_l^i u_{t-2-i} \right) + \sum_{l: \alpha_l < 0} (c_l \otimes b_l) \left( \sum_{i=0}^{L} (\alpha_l^2 - 1)\alpha_l^i u_{t-2-i} \right) \\ 
    &= \sum_{l: \alpha_l \geq 0} (\alpha_l + 1) (c_l \otimes b_l) \left( \sum_{i=0}^{L} (\alpha_l - 1)\alpha_l^i u_{t-2-i} \right) + \sum_{l: \alpha_l < 0} (1 + |\alpha_l|) (c_l \otimes b_l) \left( \sum_{i=0}^{L} (|\alpha_l| - 1)|\alpha_l|^i (-1)^{i}u_{t-2-i} \right) \\
    &= \sum_{l: \alpha_l \geq 0} (\alpha_l + 1) (c_l \otimes b_l) \left( \bar{U}_{\{t-2:t-1-L\} } \mu(\alpha) \right) + \sum_{l: \alpha_l < 0} (|\alpha_l|+1) (c_l \otimes b_l) \left( \bar{U}_{\{t-2:t-1-L\}} \odot \mathbf{1}^{\pm} \right) \mu(|\alpha_l|) 
\end{align*}
where $\mathbf{1}^{\pm} \in \reals^{\dout \times L}$ is defined as the matrix whose every row is the alternating sign vector $[1, -1, 1, -1 \ldots]$) and $\odot$ is Hadamard product (i.e. entry-wise multiplication).
\begin{equation}
\label{eqn:lds_derivation}
\begin{aligned}
    y_t^{\mathrm{LDS}} - y_{t-2}^{\mathrm{LDS}} &= (CB+D)u_t + CABu_{t-1} - Du_{t-2} + \underbrace{\sum_{l: \alpha_l \geq 0} (\alpha_l + 1) (c_l \otimes b_l) \left( \bar{U}_{\{t-2:t-1-L\} } \mu(\alpha) \right)}_{PositivePart} \\
    &+ \underbrace{\sum_{l: \alpha_l < 0} (|\alpha_l|+1) (c_l \otimes b_l) \left( \bar{U}_{\{t-2:t-1-L\}} \odot \mathbf{1}^{\pm} \right) \mu(|\alpha_l|)}_{NegativePart}
\end{aligned}
\end{equation}

Recall that we defined the sequence $\{\sigma_k, \phi_k\}_{k=1}^L$ to be the eigenvalue and eigenvector pairs for the Hankel matrix $Z$. For any $\alpha$ we define the projection of $\mu(\alpha)$ on the top $k$ eigenvectors as $\tilde{\mu}(\alpha)$, i.e. $\tilde{\mu}(\alpha) = \sum_{k=1}^K (\mu(\alpha_l)^{\top} \phi_k) \phi_k$.  Further define STU parameters as follows

\begin{align}
 M_1^u &= CB + D, M_2^u = CAB, M_3^u = -D \nonumber \\
M_k^{\phi +} &= \sum_{l: \alpha_l \geq 0} (\alpha_l + 1) (\mu(\alpha_l)^{\top} \phi_k) \sigma_k^{-1/4} (c_l \otimes b_l) \nonumber\\
 M_k^{\phi -} &= \sum_{l: \alpha_l < 0} (|\alpha_l| + 1) (\mu(|\alpha_l|)^{\top} \phi_k) \sigma_k^{-1/4} (c_l \otimes b_l) \label{eqn:mparameters}   
\end{align}
By the definition of STU prediction \eqref{eqn:SFmain} we have that,
\begin{align*}
        y_t^{\mathrm{STU}} &=  y_{t-2}^{\mathrm{STU}} + \sum_{i=1}^{3} M^{u}_{i} u_{t+1-i} + \sum_{k = 1}^K M^{\phi+}_k \sigma_k^{1/4} \left( \sum_{i = 0}^{t-1} u_{t-i} \cdot \phi_{k}(i) \right) + \sum_{k = 1}^K M^{\phi-}_k \sigma_k^{1/4} \left( \sum_{i = 0}^{t-1} u_{t-i} \cdot (-1)^{i} \cdot \phi_{k}(i) \right)\\
         &= y_{t-2}^{\mathrm{STU}} + \sum_{i=1}^{3} M^{u}_{i} u_{t+1-i} + \sum_{k = 1}^K M^{\phi+}_k \sigma_k^{1/4} \left( \bar{U}_{\{t-2:t-1-L\} } \phi_k \right) + \sum_{k = 1}^K M^{\phi-}_k \sigma_k^{1/4} \left( (\bar{U}_{\{t-2:t-1-L\}} \odot \mathbf{1}^{\pm}) \phi_k \right).
\end{align*}

Using the parameters specified in \eqref{eqn:mparameters} in the above we have that,
\begin{align*}
    y_t^{\mathrm{STU}} - y_{t-2}^{\mathrm{STU}} &= (CB+D)u_t + CABu_{t-1} - Du_{t-2} + \sum_{l: \alpha_l \geq 0} (\alpha_l + 1) (c_l \otimes b_l) \left( \bar{U}_{\{t-2:t-1-L\} } \right) \left( \underbrace{\sum_{k=1}^K (\mu(\alpha_l)^{\top} \phi_k) \phi_k}_{= \tilde{\mu}(\alpha)} \right) \\
    &+ \sum_{l: \alpha_l < 0} (|\alpha_l|+1) (c_l \otimes b_l) \left( \bar{U}_{\{t-2:t-1-L\}} \odot \mathbf{1}^{\pm} \right) \left( \underbrace{\sum_{k=1}^K (\mu(|\alpha_l|)^{\top} \phi_k) \phi_k}_{= \tilde{\mu}(|\alpha_l|)}  \right)
\end{align*}

Combining the above display with \eqref{eqn:lds_derivation}, we get that 
\begin{align}
    y_t^{\mathrm{LDS}} - y_t^{\mathrm{STU}} = y_{t-2}^{\mathrm{LDS}} - y_{t-2}^{\mathrm{STU}} &+ \sum_{l: \alpha_l \geq 0} (\alpha_l + 1) (c_l \otimes b_l) \left( \bar{U}_{\{t-2:t-1-L\} } \right) \left( \mu(\alpha) - \tilde{\mu}(\alpha) \right) \nonumber\\
    & + \sum_{l: \alpha_l < 0} (|\alpha_l|+1) (c_l \otimes b_l) \left( \bar{U}_{\{t-2:t-1-L\}} \odot \mathbf{1}^{\pm} \right) \left( \mu(|\alpha_l|) - \tilde{\mu}(|\alpha_l|)  \right) \label{eqn:derivation}
\end{align}

Let $\|B\|_{\text{col}} = \max_{l} \|b_l\|$, $\|C\|_{\text{col}} = \max_{l} \|c_l\|$ be the maximum column norms of $B$ and $C$ respectively. Therefore we have that for all $l$, the spectral norm of the matrix $c_l \otimes b_l$ is bounded as $\|B\|_{\text{col}} \cdot \|C\|_{\text{col}}$. Further note that every column of $\bar{U}$ is an input $u_t$ for some time $t$. Further we have assumed that $\|u_t\| \leq a$ for all $t$. Therefore we have that the frobenius norm (and thus spectral norm) of $U_{{t-2:t-1-L} }$ is bounded as $$\| \bar{U}_{{t-2:t-1-L} } \| \leq \| \bar{U}_{{t-2:t-1-L} } \|_F \leq \sqrt{L} \cdot a.$$
Putting the above together we get that for all $l$,
$$ \| (\alpha_l + 1) (c_l \otimes b_l) \left( \bar{U}_{{t-2:t-1-L} } \right) \| \leq | (\alpha_l + 1) | \|(c_l \otimes b_l)\| \| \left( \bar{U}_{{t-2:t-1-L} } \right) \| \leq 2 \cdot \|B\|_{\text{col}} \cdot \|C\|_{\text{col}} \cdot \sqrt{L} \cdot a.$$

Therefore we have (using \ref{lem:apprx-thm} that,
\begin{align*}
   &\| \sum_{l: \alpha_l \geq 0} (\alpha_l + 1) (c_l \otimes b_l) \left( \bar{U}_{{t-2:t-1-L} } \right) \left( \mu(\alpha) - \tilde{\mu}(\alpha) \right) \| \\
   &\leq \sum_{l: \alpha_l \geq 0} \|(\alpha_l + 1) (c_l \otimes b_l) \left( \bar{U}_{{t-2:t-1-L} } \right) \| \cdot \| \left( \mu(\alpha) - \tilde{\mu}(\alpha) \right) \| \\
&\leq 5 \cdot \|B\|_{\text{col}} \cdot \|C\|_{\text{col}} \cdot L^{1.5} \cdot a \cdot \sqrt{\sum_{i=K+1}^L \sigma_i}.  
\end{align*}

Similarly we have that

$$ \| \sum_{l: \alpha_l < 0} (|\alpha_l|+1) (c_l \otimes b_l) \left( \bar{U}_{\{t-2:t-1-L\}} \odot \mathbf{1}^{\pm} \right) \left( \mu(|\alpha_l|) - \tilde{\mu}(|\alpha_l|)  \right) \|  \leq 5 \cdot \|B\|_{\text{col}} \cdot \|C\|_{\text{col}} \cdot L^{1.5} \cdot a \cdot \sqrt{\sum_{i=K+1}^L \sigma_i}.$$

Plugging the above into \eqref{eqn:derivation}, we get that

\[ \|y_t^{\mathrm{LDS}} - y_t^{\mathrm{STU}}\| \leq  \|y_{t-2}^{\mathrm{LDS}} - y_{t-2}^{\mathrm{STU}}\| + 10 \cdot \|B\|_{\text{col}} \cdot \|C\|_{\text{col}} \cdot L^{1.5} \cdot a \cdot \sqrt{\sum_{i=K+1}^L \sigma_i} \]
Applying the above equation recursively and Lemma \ref{lem:hankel-decay} we get that for any $K \geq \log(L)$,   
\[ \|y_t^{\mathrm{LDS}} - y_t^{\mathrm{STU}}\| \leq  5 \cdot \|B\|_{\text{col}} \cdot \|C\|_{\text{col}} \cdot L^{2.5} \cdot a \cdot \sqrt{\sum_{i=K+1}^L \sigma_i} \leq c \cdot \|B\|_{\text{col}} \cdot \|C\|_{\text{col}} \cdot L^{3} \cdot a \cdot e^{\left(-\frac{\pi^2}{4} \cdot \frac{K}{\log(L)} \right)}. \]
where $c = 5 \Gamma \leq 2 \times 10^6$ is an absolute constant. This finishes the proof of the theorem.

\subsection{Proofs of Lemmas}

\begin{proof}[Proof of Lemma \ref{lem:hankel-entries}]
The lemma follows from the following simple calculations.
\begin{align*}
    Z(i,j) = \int_{0}^{1} (\alpha - 1)^2 \alpha^{i+j-2} d\alpha &= \int_{0}^{1} \left( \alpha^{i+j} + \alpha^{i+j-2} - 2 \alpha^{i+j-1} \right) d\alpha \\
    &= \frac{1}{(i+j+1)} + \frac{1}{(i+j-1)} - \frac{2}{(i+j)} \\ &= \frac{2}{(i+j)^3 - (i+j)} 
\end{align*}
\end{proof}

Lemma \ref{lem:apprx-thm} is immediate from the second part of Lemma \ref{lem:mu-props}. We show Lemma \ref{lem:mu-props} below.

\begin{proof}[Proof of Lemma \ref{lem:mu-props}]
By definition $\mu(\alpha) = 0$ for $\alpha \in \{0,1\}$. Otherwise we have that for all $\alpha \in (0, 1)$,
$$|\mu(\alpha)|^2 = \sum_{i=1}^T (\alpha - 1)^2 \alpha^{2i - 2}  \leq  \frac{(\alpha - 1)^2}{(1 - \alpha^2)} \leq \frac{1 - \alpha}{1 + \alpha} \leq 1 - \alpha$$
To prove the second part we consider drawing $\alpha$ from the uniform distribution between $[0,1]$. We get that 
\[ E[(\mu(\alpha)^{\top}v)^2] = v^{\top}Zv\]
We now show that the worst case value is not significantly larger than the expectation. To this end we consider the function $f(\alpha) = (\mu(\alpha)^{\top}v)^2$ and we show that this is a 6-Lipschitz function. To this end consider the following,
\begin{align*}
    \left\|\frac{\partial \mu(\alpha)}{\partial\alpha}\right\|_2^2 &= \sum_{i = 0}^{T-1} \left\{  \left| \frac{\partial}{\partial {\alpha}} (1-\alpha) \alpha^{i} \right|^2 \right\} \\
    & = \sum_{i=0}^{T-1} \left( (1-\alpha) i \alpha^{i-1} - \alpha^{i}  \right)^2 \\
    &\leq 2 (1-\alpha)^2 \sum_{i=1}^{T-1} i^2\alpha^{2(i-1)} + 2\sum_{i=0}^{T-1} \alpha^{2i} & (a+b)^2 \leq 2(a^2 + b^2)\\
    &\leq 2 (1-\alpha)^2  \left(\frac{1}{(1-\alpha^2)^2} + \frac{2\alpha^2}{(1-\alpha^2)^3}\right) + \frac{2}{1-\alpha^2} & \sum_{i=1}^\infty i^2 \beta^{i-1} = \frac{1}{(1-\beta)^2} + \frac{2\beta}{(1-\beta)^3} \\
    &= \frac{2}{(1+\alpha)^2} + \frac{4\alpha^2}{(1-\alpha^2)(1+\alpha)^2} + \frac{2}{1-\alpha^2}  .
\end{align*}
Therefore we have that for all $\alpha \in [0,1]$,
\begin{align*}
    \frac{\partial f(\alpha)}{\partial \alpha} &= 2 (\mu(\alpha)^{\top}v) \left(\frac{\partial \mu(\alpha)^{\top}}{\partial \alpha}v \right) \leq 2 \|\mu(\alpha)\|\|v\|^2\|\frac{\partial \mu(\alpha)}{\partial \alpha}\| \\
    &\leq 2 \sqrt{(1 - \alpha)*\left( \frac{2}{(1+\alpha)^2} + \frac{4\alpha^2}{(1-\alpha^2)(1+\alpha)^2} + \frac{2}{1-\alpha^2} \right) }\\
    &\leq 2 \sqrt{\left( \frac{2(1 - \alpha)}{(1+\alpha)^2} + \frac{4\alpha^2}{(1+\alpha)^3} + \frac{2}{1+\alpha} \right) } \leq 6.
\end{align*}

Now for the positive function $f(\alpha)$ which is $6$-Lipschitz on $[0,1]$ let the maximum value be $R$. It can be seen the lowest expected value of $f(\alpha)$ over the uniform distribution over $[0,1]$, one can achieve is $R^2/2*6$ and therefore we have that 
\[ R^2/12 \leq v^{\top}Zv \Rightarrow R \leq \sqrt{12v^{\top}Hv}, \] 
which finishes the proof.

\end{proof}

\section{Alternative Representation for capturing negative eigenvalues}

In this section we setup an alternative version of STU wherein a different Hankel matrix is used but one can get a similar result. As before a single layer of STU (depicted in figure \ref{fig:single-layer-STU}) is parameterized by a number $K$, denoting the number of eigenfactors and matrices $M^{\phi}_1 \ldots M^{\phi}_K \in \reals^{\dout \times \din},$ and $M^u_{1}, M^u_{2}, M^{u}_{3} \in \reals^{\dout \times \din}$. The matrices form the \textit{params} of the layer. We use a different Hankel matrix $Z_L \in \reals^{L \times L}$ whose entries are given by 
\begin{equation}
\label{eqn:Hankel_def_alt}
 Z_L[i,j] \defeq ((-1)^{i + j - 2} + 1) \cdot \frac{8}{(i+j+3)(i+j-1)(i+j+1)}.  
\end{equation}
and let $ \{(\sigma_j \in \reals, \phi_j \in \reals^{T})\}_{j=1}^{T}$ be the eigenvalue-eigenvector pairs of $Z_L$ ordered to satisfy $\sigma_1 \geq \sigma_2 \ldots \sigma_d$.

Given an input sequence $\{u_1 \ldots u_L\} \in \reals^{\din}$, as before we first featurize the input sequence by \textit{projecting} the input sequence till time $t$ on \textit{fixed} filters $\phi_k$. The main difference is that we do not need to create a negative featurization now. We define
$$ U_{t, k} = \sum_{i = 0}^{t-1} u_{t-i} \cdot \phi_{k}(i).$$
Note that for every $k$, the sequence of features $X_{1:T,k}$ can be computed efficiently via convolution. The output sequence $\{y_1 \cdots y_T\}$ is then given by 
\begin{equation}
\label{eqn:SFmain_alt}
    \hat{y}_t = \underbrace{\mystrut{3.2ex} \hat{y}_{t-2} + \sum_{i=1}^{3} M^{u}_{i} u_{t+1-i}}_{\mathrm{Auto-regressive\;Component}} + \underbrace{\sum_{k = 1}^K M^{\phi}_k \sigma_k^{1/4} X_{t-2,k}}_{\mathrm{Spectral\;Component}}.
\end{equation}
 We prove the following representation theorem which shows that the above class approximately contains any marginally-stable LDS with symmetric $A$.
\begin{theorem}
\label{thm:RepTheoremAlternate}
Given any $A,B,C,D$ such that $A$ is a symmetric matrix with $\|A\| \leq 1$ and given any numbers $K \in \mathbb{I}^+, a \in \reals^+$, there exists matrices $M^u_1, M^u_2, M^u_3,  M^{\phi}_1 \ldots M^{\phi}_K \in \reals^{\dout \times \din}$ for all $L$ and all sequences $u_{1:L}$ satisfying $\|u_t\| \leq a$ for all $t \in [L]$ the following holds. Let $y^{\mathrm{LDS}}_{1:L}$ be the sequence generated by execution of the LDS given by $A,B,C,D$ (via \eqref{eqn:LDS}) and $y^{\mathrm{SF}}_{1:L}$ be the sequence generated by Spectral Filtering (via \eqref{eqn:SFmain_alt}) using the matrices $M^u_1, M^u_2, M^u_3,  M^{\phi+}_1 \ldots M^{\phi+}_K, M^{\phi-}_1 \ldots M^{\phi-}_K$. Then for all $t \in [T]$, we have that 
\[ \|y^{\mathrm{LDS}}_{t} - y^{\mathrm{SF}}_{t}\|^2 \leq c \cdot \|B\|_{\text{col}} \cdot  \|C\|_{\text{col}} \cdot L^{3} \cdot a \cdot e^{- \left( \frac{\pi^2}{4} \cdot \frac{K}{\log(L)} \right)}\]
where $c \leq 10^6$ is a universal constant and $\|B\|_{\text{col}}$, $\|C\|_{\text{col}}$ are the maximum column norm of the matrices B and C respectively.
\end{theorem}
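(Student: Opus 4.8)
The plan is to mirror, almost line for line, the proof of Theorem \ref{thm:RepTheoremMain}, with the one structural simplification that the new Hankel matrix $Z_L$ is tailored to the single vector $\nu(\alpha)\in\reals^L$ defined by $\nu(\alpha)(i) \defeq (1-\alpha^2)\alpha^{i-1}$, which already encodes the alternating behaviour needed for negative eigenvalues and so removes the case split of the main proof. First I would prove the analogue of Lemma \ref{lem:hankel-entries}: by partial fractions $\int_0^1 \alpha^{i+j-2}(1-\alpha^2)^2 \, d\alpha = \frac{8}{(i+j+3)(i+j+1)(i+j-1)}$, and since $\int_{-1}^1 \alpha^{k}\,d\alpha = 0$ for odd $k$ this gives $Z_L = \int_{-1}^1 \nu(\alpha)\nu(\alpha)^\top d\alpha$, so $Z_L$ is a real PSD Hankel matrix with entries bounded by an absolute constant. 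Next I would diagonalize $A = U\Sigma U^\top$ (absorbing $U$ into $B,C$ as in the main proof), write $y_t^{\mathrm{LDS}} = \sum_i CA^i B u_{t-i} + Du_t$, and compute $y_t^{\mathrm{LDS}} - y_{t-2}^{\mathrm{LDS}} = (CB+D)u_t + CABu_{t-1} - Du_{t-2} - \sum_l (c_l\otimes b_l)\big(\bar U_{\{t-2:t-1-L\}}\,\nu(\alpha_l)\big)$, where $\alpha_l\in[-1,1]$ now ranges over all eigenvalues with no sign distinction — this uses only $\alpha_l^{i+2}-\alpha_l^i = -(1-\alpha_l^2)\alpha_l^i$. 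This is exactly the ``term of interest'' identity but in one clean piece.

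With that identity in hand, I would set $M_1^u = CB+D$, $M_2^u = CAB$, $M_3^u = -D$, and $M_k^{\phi} = -\sum_l (\nu(\alpha_l)^\top\phi_k)\,\sigma_k^{-1/4}\,(c_l\otimes b_l)$, so that the STU recursion \eqref{eqn:SFmain_alt} reproduces the same expression with $\nu(\alpha_l)$ replaced by its projection $\tilde\nu(\alpha_l)$ onto the top-$K$ eigenvectors of $Z_L$. Subtracting, I get $\|y_t^{\mathrm{LDS}} - y_t^{\mathrm{STU}}\| \le \|y_{t-2}^{\mathrm{LDS}} - y_{t-2}^{\mathrm{STU}}\| + \sum_l \|c_l\otimes b_l\|\cdot\|\bar U_{\{t-2:t-1-L\}}\|\cdot\|\nu(\alpha_l) - \tilde\nu(\alpha_l)\|$; bounding $\|c_l\otimes b_l\|\le \|B\|_{\mathrm{col}}\|C\|_{\mathrm{col}}$, $\|\bar U\|\le\sqrt{L}\,a$, and $\|\nu(\alpha) - \tilde\nu(\alpha)\| \le \sqrt{C\sum_{i>K}\sigma_i}$ uniformly over $\alpha$, then telescoping over $t$ and invoking exponential decay of $\sigma_i$ gives the stated bound exactly as in Theorem \ref{thm:RepTheoremMain} (and in fact with a smaller constant, since there is no $(\alpha_l+1)$ factor here, which is how one reaches $c\le 10^6$).

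The remaining work — the two genuinely spectral steps — is where care is needed. The first is the analogue of Lemma \ref{lem:mu-props}/\ref{lem:apprx-thm} for $\nu$ on the symmetric interval $[-1,1]$: one shows $\|\nu(\alpha)\|^2 \le 1-\alpha^2 \le 1$, and that $f(\alpha)\defeq(\nu(\alpha)^\top v)^2$ is $O(1)$-Lipschitz on $[-1,1]$ by estimating $(1-\alpha^2)\|\partial_\alpha\nu(\alpha)\|^2$; because everything depends on $\alpha$ only through $\alpha^2$ and the $(1-\alpha^2)$ prefactor cancels the geometric-tail blowups $\sum i^2\beta^{i-1}$ at both endpoints, this yields a bound uniform in $\alpha\in[-1,1]$, and combining with $\mathbb{E}_{\alpha\sim\mathrm{Unif}[-1,1]}[f(\alpha)] = \tfrac12 v^\top Z_L v$ gives $\|\nu(\alpha)-\tilde\nu(\alpha)\|^2 \le C\sum_{i>K}\sigma_i$. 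The second is exponential spectral decay of $Z_L$, which does not follow verbatim from Lemma \ref{lem:hankel-decay} since $Z_L$ has vanishing odd anti-diagonals; here I would write $Z_L = W + PWP$ with $W = \int_0^1 \nu(\alpha)\nu(\alpha)^\top d\alpha$ a PSD Hankel matrix with entries at most $1$ and $P = \mathrm{diag}((-1)^{i-1})$ orthogonal, so that Weyl's inequality gives $\sigma_{2j-1}(Z_L)\le 2\sigma_j(W)$ and the generic PSD-Hankel decay bound applied to $W$ transfers to $Z_L$ up to constants. I expect this endpoint Lipschitz estimate together with the decay-transfer to be the only real obstacle; everything else is a direct adaptation of the already-given proof of Theorem \ref{thm:RepTheoremMain}.
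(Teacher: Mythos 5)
Your proposal mirrors the paper's proof of Theorem \ref{thm:RepTheoremAlternate} almost exactly: same reduction to diagonal $A$, same computation of the Hankel entries of $Z_L$ via the integral over $[-1,1]$, same single-piece identity for $y_t^{\mathrm{LDS}}-y_{t-2}^{\mathrm{LDS}}$ (no sign split, since $\alpha^{i+2}-\alpha^i=(\alpha^2-1)\alpha^i$), the same choice of $M^u_{1:3}$ and $M^\phi_k$ (your $\nu=-\mu$ sign convention is absorbed into $M^\phi_k$), the same operator/Frobenius norm bounds, the same Lipschitz-plus-expectation argument on $[-1,1]$ giving the analogue of Lemma \ref{lem:mu-props-alt}, and the same telescoping. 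Up to the decay lemma, this is the paper's argument, including the observation that the absence of the $(\alpha_l+1)$ factor is what shrinks the constant to $c\le 10^6$.

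The one genuine divergence is the spectral-decay step, and there your workaround costs you the stated constant. Writing $Z_L = W + PWP$ with $P=\mathrm{diag}((-1)^{i-1})$ and invoking Weyl gives only $\sigma_{2j-1}(Z_L)\le 2\sigma_j(W)$, i.e.\ it controls every \emph{other} eigenvalue of $Z_L$ by the $j$-th eigenvalue of $W$. Propagating this through yields $\sigma_m(Z_L)\lesssim \Gamma\, e^{-\frac{\pi^2}{4}\cdot\frac{m}{2\log L}}$, so your final bound comes out as $e^{-\frac{\pi^2}{8}\cdot\frac{K}{\log L}}$ rather than the claimed $e^{-\frac{\pi^2}{4}\cdot\frac{K}{\log L}}$ --- equivalently, you need $2K$ filters for the stated accuracy, so the theorem as written is not quite proved. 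The concern motivating the detour is unfounded: the exponential-decay bound underlying Lemma \ref{lem:hankel-decay} is a generic statement about real PSD Hankel matrices (it makes no use of which anti-diagonals vanish), and $Z_L$ defined by \eqref{eqn:Hankel_def_alt} is itself real, PSD (an integral of outer products) and Hankel (entries depend only on $i+j$). Applying that result directly to $Z_L$ --- which is exactly what the paper's Lemma \ref{lem:hankel-decay-alt} asserts --- recovers the full rate $e^{\pi^2/4}$ per $1/\log L$ and closes the gap; with that substitution the rest of your argument goes through as in Theorem \ref{thm:RepTheoremMain}.
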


In the following we prove the above theorem. 
\subsection{Proof of Theorem \ref{thm:RepTheoremAlternate}}
Without loss of generality we assume that $A$ is a real-diagonal matrix. Before continuing with the proof, we will provide some requisite definitions and lemmas. Define the following vector for any $\alpha$, $\mu(\alpha) \in \reals^T$, with $\mu(\alpha)(i) = (\alpha^2 - 1)\alpha^{i-1}$. Further define the Hankel matrix $H$ as 
\[ Z \defeq \int_{-1}^{1} \mu(\alpha) \mu(\alpha)^{\top} d\alpha\]
As the following lemma shows the Hankel matrix $Z$ above is the same Hankel matrix $Z_L$ defined in the definition of STU \eqref{eqn:Hankel_def_alt}.
\begin{lemma}
$Z$ is a Hankel matrix with entries given as 
\[Z(i,j) = ((-1)^{i + j - 2} + 1) \cdot \frac{8}{(i+j+3)(i+j-1)(i+j+1)}\]
\end{lemma}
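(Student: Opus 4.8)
The plan is to mimic the proof of Lemma~\ref{lem:hankel-entries}: evaluate the entry $Z(i,j)$ by a direct one-line integral and then simplify. By definition of $\mu$,
\[
Z(i,j) = \int_{-1}^{1} \mu(\alpha)(i)\,\mu(\alpha)(j)\,d\alpha = \int_{-1}^{1} (\alpha^2-1)^2\,\alpha^{i+j-2}\,d\alpha,
\]
so the entry depends only on $i+j$ (hence $Z$ is Hankel). Expanding $(\alpha^2-1)^2 = \alpha^4 - 2\alpha^2 + 1$ reduces the task to computing $\int_{-1}^{1}\alpha^{n}\,d\alpha$ for $n \in \{i+j+2,\ i+j,\ i+j-2\}$, which equals $0$ when $n$ is odd and $\tfrac{2}{n+1}$ when $n$ is even.

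First I would handle the parity of $i+j$. If $i+j$ is odd, all three exponents above are odd, so every term of the integral vanishes; this agrees with the claimed formula since then $(-1)^{i+j-2}+1 = (-1)^{i+j}+1 = 0$. If $i+j$ is even, set $m = i+j-2$ (also even); the integral becomes $\tfrac{2}{m+5} - \tfrac{4}{m+3} + \tfrac{2}{m+1}$. Placing this over the common denominator $(m+1)(m+3)(m+5)$, the numerator is $(m+3)(m+5) - 2(m+1)(m+5) + (m+1)(m+3)$; on expanding, the $m^2$ and $m$ coefficients cancel and the constant is $15 - 10 + 3 = 8$, so the integral equals $\tfrac{16}{(m+1)(m+3)(m+5)} = \tfrac{16}{(i+j-1)(i+j+1)(i+j+3)}$. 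Since $(-1)^{i+j-2}+1 = 2$ in this case, this is exactly $\bigl((-1)^{i+j-2}+1\bigr)\cdot \tfrac{8}{(i+j+3)(i+j-1)(i+j+1)}$. Combining the two cases finishes the proof.

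Because the whole argument is one elementary integral followed by a partial-fraction-style simplification, there is no genuine obstacle. The only points requiring care are the bookkeeping of parities, so that the vanishing ($i+j$ odd) case matches the $(-1)^{i+j-2}+1$ prefactor, and verifying that the quadratic and linear terms in the numerator indeed cancel, leaving the constant $8$ that produces the clean closed form. (One should also note the mild point that $\alpha^{i+j-2}$ is well defined at $\alpha = 0$ for $i+j \geq 2$, which holds for all valid indices, so the integrand is a genuine polynomial and the term-by-term integration is justified.)
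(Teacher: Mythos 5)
Your proof is correct and follows essentially the same route as the paper: direct evaluation of the defining integral $\int_{-1}^{1}(\alpha^2-1)^2\alpha^{i+j-2}\,d\alpha$. The only cosmetic difference is that the paper extracts the factor $((-1)^{i+j-2}+1)$ by splitting the integral at $0$ and substituting $\alpha \mapsto -\alpha$ on the negative half, whereas you expand the integrand and use the parity of monomial integrals over $[-1,1]$; both yield the same closed form, and your arithmetic checks out.
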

\begin{proof}
Consider the following simple computations
\begin{align*}
    H(i,j) &= \int_{-1}^{1} (\alpha^2 - 1)^2 \alpha^{i + j - 2} d\alpha \\
    &= \int_{-1}^{0} (\alpha^2 - 1)^2 \alpha^{i + j - 2} d\alpha + \int_{0}^{1} (\alpha^2 - 1)^2 \alpha^{i + j - 2} d\alpha \\
    &= \int_{-1}^{0} (|\alpha|^2 - 1)^2 (-1)^{i + j - 2}|\alpha|^{i + j - 2} d\alpha + \int_{0}^{1} (\alpha^2 - 1)^2 \alpha^{i + j - 2} d\alpha \\
    &= \int_{0}^{1} (\alpha^2 - 1)^2 (-1)^{i + j - 2}\alpha^{i + j - 2} d\alpha + \int_{0}^{1} (\alpha^2 - 1)^2 \alpha^{i + j - 2} d\alpha \\
    &= ((-1)^{i + j - 2} + 1) \int_{0}^{1} (\alpha^2 - 1)^2 \alpha^{i + j - 2} d\alpha \\
    &= ((-1)^{i + j - 2} + 1) \cdot \frac{8}{(i+j+3)(i+j-1)(i+j+1)}
\end{align*}
\end{proof}

\begin{lemma}
\label{lem:mu-props-alt}
We have that the following statements hold regarding $\mu(\alpha)$ for any $\alpha \in [-1, 1]$,
\begin{itemize}
    \item $|\mu(\alpha)|^2 \leq  1$
    \item For any $\alpha \in [-1,1]$ and any unit vector $v$ we have that 
    \[ (\mu(\alpha)^{\top}v)^2 \leq 6(v^{\top}Zv)\]
\end{itemize}
\end{lemma}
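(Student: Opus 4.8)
The plan is to prove Lemma~\ref{lem:mu-props-alt} by essentially mimicking the proof of Lemma~\ref{lem:mu-props}, but with the modified vector $\mu(\alpha)(i) = (\alpha^2-1)\alpha^{i-1}$ and the symmetric interval $[-1,1]$ in place of $[0,1]$. First I would establish the $\ell_2$-norm bound: for $\alpha \in [-1,1]$ (with the endpoints giving $\mu(\alpha)=0$), write
\[
|\mu(\alpha)|^2 = \sum_{i=1}^{T}(\alpha^2-1)^2\alpha^{2i-2} \leq \frac{(\alpha^2-1)^2}{1-\alpha^2} = 1-\alpha^2 \leq 1,
\]
using the geometric series bound $\sum_{i\ge 1}\alpha^{2i-2} \le 1/(1-\alpha^2)$. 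This is the easy half and parallels the first bullet of Lemma~\ref{lem:mu-props} exactly.

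For the second bullet, I would use the same ``expectation versus worst case via Lipschitzness'' argument. Drawing $\alpha$ uniformly on $[-1,1]$ gives $\mathbb{E}[(\mu(\alpha)^\top v)^2] = \tfrac12\,v^\top Z v$ by the integral definition $Z = \int_{-1}^1\mu(\alpha)\mu(\alpha)^\top d\alpha$ and the fact that the uniform density on $[-1,1]$ is $\tfrac12$. Then I would show $f(\alpha) \defeq (\mu(\alpha)^\top v)^2$ is Lipschitz on $[-1,1]$ with a small constant. The derivative bound goes through $\tfrac{\partial f}{\partial\alpha} = 2(\mu(\alpha)^\top v)\big(\tfrac{\partial\mu(\alpha)}{\partial\alpha}^\top v\big) \le 2\|\mu(\alpha)\|\,\|\tfrac{\partial\mu(\alpha)}{\partial\alpha}\|$ for unit $v$, so I need to bound $\|\partial_\alpha \mu(\alpha)\|$. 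Here $\partial_\alpha\mu(\alpha)(i) = 2\alpha\cdot\alpha^{i-1} + (\alpha^2-1)(i-1)\alpha^{i-2}$, and squaring, summing, and using $(a+b)^2 \le 2(a^2+b^2)$ together with $\sum_{i\ge 1} i^2\beta^{i-1} = (1-\beta)^{-2} + 2\beta(1-\beta)^{-3}$ (with $\beta=\alpha^2$) should yield a clean bound of the form $\|\partial_\alpha\mu(\alpha)\|^2 \le \text{const}/(1-\alpha^2)$, which combined with $\|\mu(\alpha)\|^2 \le 1-\alpha^2$ makes $f$ Lipschitz with an absolute constant $c_0$. Finally, for a nonnegative $c_0$-Lipschitz function on an interval of length $2$ with maximum $R$, the minimum possible average value is $\Omega(R^2/c_0)$ (the ``tent'' extremal configuration), so $R^2/(\text{const}) \le \tfrac12 v^\top Z v$, giving $(\mu(\alpha)^\top v)^2 \le R^2 \le 6\,v^\top Z v$ after tracking constants.

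The main obstacle is getting the Lipschitz constant small enough that the constant comes out as $6$ rather than something larger. In the original lemma on $[0,1]$ the constant was $12$; the improvement to $6$ here presumably comes from two sources: the interval has length $2$ instead of $1$ (which helps the extremal-tent argument by a factor of $2$ in the right direction), and/or the derivative bound for the new $\mu$ is more favorable near $\alpha = \pm 1$. I would need to carry through the derivative computation carefully --- in particular simplifying $\|\partial_\alpha\mu(\alpha)\|^2$ and checking that the resulting Lipschitz constant of $f$, when fed into ``average $\ge R^2/(2\cdot\text{Lip}\cdot\text{length})$'', produces exactly the factor $6$. This bookkeeping of constants is the only delicate part; the structure of the argument is otherwise identical to the proof of Lemma~\ref{lem:mu-props}. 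I would also double-check the even-symmetry of $\mu(\alpha)\mu(\alpha)^\top$-related quantities is not needed here --- it is not, since we only use the scalar integral $\int_{-1}^1$, which is why the Hankel matrix $Z_L$ has the $((-1)^{i+j-2}+1)$ factor killing odd powers.
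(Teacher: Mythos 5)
Your overall plan is the same as the paper's: bullet one by the geometric series (identical), and bullet two by computing $\mathbb{E}[(\mu(\alpha)^{\top}v)^2]=\tfrac12 v^{\top}Zv$ under the uniform distribution on $[-1,1]$, bounding $\|\partial_\alpha\mu(\alpha)\|$ so that $f(\alpha)=(\mu(\alpha)^{\top}v)^2$ is $O(1)$-Lipschitz (the paper gets $\|\mu(\alpha)\|^2\|\partial_\alpha\mu(\alpha)\|^2\le(1-\alpha^2)\bigl(2+\tfrac{8\alpha^2}{1-\alpha^2}\bigr)=2+6\alpha^2\le 8$, hence $6$-Lipschitz), and then a max-versus-average ``tent'' argument. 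Two corrections to your accounting, though. First, the Lipschitz constant here is the same $6$ as in Lemma \ref{lem:mu-props}, so the improvement from $12$ to $6$ does not come from a better derivative bound or from the interval having length $2$; if anything the longer interval hurts the naive endpoint-tent estimate (it gives only $\int_{-1}^{1}f\ge R^2/12$ with $R=\max f$, hence a constant $12$). The nominal factor $6$ requires a full-tent bound $\int_{-1}^{1}f\ge R^2/6$, which needs the extra observation that $f(\alpha)\le\|\mu(\alpha)\|^2\le 1-\alpha^2$ forces the maximizer to lie at distance at least $R/2\ge R/6$ from $\pm1$; the paper glosses over this point.

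Second, and more importantly, there is a genuine gap in the final step of your chain ``$(\mu(\alpha)^{\top}v)^2\le R^2\le 6\,v^{\top}Zv$'': the Lipschitz/tent argument controls $R:=\max_\alpha f(\alpha)=\max_\alpha(\mu(\alpha)^{\top}v)^2$, and what it yields is $R^2\le 6\,v^{\top}Zv$, i.e.\ $(\mu(\alpha)^{\top}v)^2\le\sqrt{6\,v^{\top}Zv}$ --- a square-root bound, not the linear bound stated in the lemma. Your write-up silently switches $R$ from ``maximum of $f$'' to ``maximum of $|\mu(\alpha)^{\top}v|$'', and no bookkeeping of constants repairs this: the linear form cannot follow from this argument (when $v^{\top}Zv$ is exponentially small the square-root bound is far weaker), and indeed at $\alpha=0$ one has $\mu(0)=-e_1$ and $\sup_{\|v\|=1}(\mu(0)^{\top}v)^2/(v^{\top}Zv)=(Z^{-1})_{11}$, which grows with $L$ (already $15>12$ at $L=3$ for the Lemma \ref{lem:mu-props} matrix), so the stated inequality with an absolute constant cannot be recovered this way. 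To be fair, the paper's own proof contains exactly the same conflation (its last line literally concludes $R\le\sqrt{6v^{\top}Zv}$), so you have faithfully reproduced its approach; but as a proof of the bullet as stated, the step is a gap, and the honest output of this method is the square-root version (which is what the $\sigma_k^{1/4}$ scalings and Lemma \ref{lem:apprx-thm} downstream would then have to be adjusted to use).
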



\begin{proof}
By definition $\mu(\alpha) = 0$ for $\alpha \in \{-1,1\}$. Otherwise we have that for all $\alpha \in (-1, 1)$,
$$|\mu(\alpha)|^2 = \sum_{i=1}^T (\alpha^2 - 1)^2 \alpha^{2i - 2}  \leq  \frac{(\alpha^2 - 1)^2}{(1 - \alpha^2)} = 1 - \alpha^2 \leq 1.$$
To prove the second part we consider drawing $\alpha$ from the uniform distribution between $[-1,1]$. We get that 
\[ E[(\mu(\alpha)^{\top}v)^2] = \frac{v^{\top}Zv}{2}\]
We now show that the worst case value is not significantly larger than the expectation. To this end we consider the function $f(\alpha) = (\mu(\alpha)^{\top}v)^2$ and we show that this is a 6-Lipschitz function. To this end consider the following,
\begin{align*}
    \left\|\frac{\partial \mu(\alpha)}{\partial\alpha}\right\|_2^2 &= \sum_{i = 0}^{T-1} \left\{  \left| \frac{\partial}{\partial {\alpha}} (1-\alpha^2) \alpha^{i} \right|^2 \right\} \\
    & = \sum_{i=0}^{T-1} \left( (1-\alpha^2) i \alpha^{i-1} - 2\alpha^{i+1}  \right)^2 \\
    &\leq 2 (1-\alpha^2)^2 \sum_{i=1}^{T-1} i^2\alpha^{2(i-1)} + 4\sum_{i=0}^{T-1} \alpha^{2i+2} & (a+b)^2 \leq 2(a^2 + b^2)\\
    &\leq 2 (1-\alpha^2)^2  \left(\frac{1}{(1-\alpha^2)^2} + \frac{2\alpha^2}{(1-\alpha^2)^3}\right) + \frac{4 \alpha^2}{1-\alpha^2} & \sum_{i=1}^\infty i^2 \beta^{i-1} = \frac{1}{(1-\beta)^2} + \frac{2\beta}{(1-\beta)^3} \\
    &= 2 + \frac{8\alpha^2}{(1-\alpha^2)} .
\end{align*}
Therefore we have that for all $\alpha \in [-1,1]$,
\begin{align*}
    \frac{\partial f(\alpha)}{\partial \alpha} &= 2 (\mu(\alpha)^{\top}v) \left(\frac{\partial \mu(\alpha)^{\top}}{\partial \alpha}v \right) \leq 2 \|\mu(\alpha)\|\|v\|^2\|\frac{\partial \mu(\alpha)}{\partial \alpha}\| \\
    &\leq 2 \sqrt{(1 - \alpha^2)*\left( 2 + \frac{8\alpha^2}{(1-\alpha^2)}\right) }\\
    &\leq 2 \sqrt{2 + 6\alpha^2 } \leq 6.
\end{align*}

Now for the positive function $f(\alpha)$ which is $6$-Lipschitz on $[-1,1]$ let the maximum value be $R$. It can be seen the lowest expected value of $f(\alpha)$ over the uniform distribution over $[0,1]$, one can achieve is $R^2/2*6$ and therefore we have that 
\[ R^2/12 \leq \frac{v^{\top}Zv}{2} \Rightarrow R \leq \sqrt{6v^{\top}Zv}, \] 
which finishes the proof.
\end{proof}
A direct consequence of the above lemma is the following.
\begin{lemma}
\label{lem:apprx-thm-alt}
For any $\alpha \in [0,1]$, let $\tilde{\mu}(\alpha)$ be the projection of $\mu(\alpha)$ on the subspace spanned by top $k$ eigenvectors of $Z$, then we have that 
\[ \|\mu(\alpha) - \tilde{\mu}(\alpha)\|^2 \leq 6 \sum_{i=k+1}^L \sigma_i\]
\end{lemma}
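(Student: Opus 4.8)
The plan is to expand $\mu(\alpha)$ in the orthonormal eigenbasis of the PSD Hankel matrix $Z$ and then control the tail contribution coefficient by coefficient using the pointwise bound from Lemma~\ref{lem:mu-props-alt}. Since $Z$ is real symmetric and PSD, its eigenvectors $\{\phi_i\}_{i=1}^L$ form an orthonormal basis of $\reals^L$ with $Z\phi_i = \sigma_i \phi_i$ and $\sigma_1 \ge \dots \ge \sigma_L \ge 0$. Writing $\mu(\alpha) = \sum_{i=1}^L (\mu(\alpha)^\top \phi_i)\phi_i$, the projection onto the span of the top $k$ eigenvectors is by definition $\tilde\mu(\alpha) = \sum_{i=1}^k (\mu(\alpha)^\top \phi_i)\phi_i$, so by orthonormality
\[ \|\mu(\alpha) - \tilde\mu(\alpha)\|^2 = \sum_{i=k+1}^L (\mu(\alpha)^\top \phi_i)^2 . \]

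Next I would apply the second bullet of Lemma~\ref{lem:mu-props-alt} with the unit vector $v = \phi_i$ for each $i$: this gives $(\mu(\alpha)^\top \phi_i)^2 \le 6\,(\phi_i^\top Z \phi_i) = 6\sigma_i$, where the last equality uses that $\phi_i$ is a \emph{unit} eigenvector of $Z$ with eigenvalue $\sigma_i$. Summing this estimate over $i = k+1,\dots,L$ and combining with the displayed identity yields $\|\mu(\alpha) - \tilde\mu(\alpha)\|^2 \le 6\sum_{i=k+1}^L \sigma_i$, which is exactly the claim. (I would also note in passing that the statement should read $\alpha \in [-1,1]$, matching the range over which Lemma~\ref{lem:mu-props-alt} is established for this alternative construction; the argument is otherwise identical.)

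There is essentially no obstacle in this derivation — all of the real content sits in Lemma~\ref{lem:mu-props-alt}, whose proof (the genuinely nontrivial part) is the $6$-Lipschitz estimate on $f(\alpha) = (\mu(\alpha)^\top v)^2$ together with the observation that $\mathbb{E}_{\alpha \sim \mathrm{Unif}[-1,1]}[f(\alpha)] = \tfrac12\, v^\top Z v$, which forces the maximum of a nonnegative Lipschitz function with small average to be small. Given that lemma, the present statement is a one-line Parseval-type computation; the only point requiring care is keeping the eigenvalue/eigenvector normalization consistent (so that $\|\phi_i\| = 1$ and hence $\phi_i^\top Z \phi_i = \sigma_i$ exactly), and making sure the tail index set $\{k+1,\dots,L\}$ matches the definition of $\tilde\mu(\alpha)$.
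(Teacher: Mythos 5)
Your proof is correct and matches the paper's intent exactly: the paper simply declares Lemma~\ref{lem:apprx-thm-alt} a direct consequence of the second bullet of Lemma~\ref{lem:mu-props-alt}, and the Parseval expansion with $v=\phi_i$ is precisely the omitted one-line step. Your side remark that the statement should read $\alpha\in[-1,1]$ for this alternative construction is also well taken.
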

Finally the following lemma with a proof similar to \ref{lem:apprx-thm} shows that the spectrum of the matrix $Z$ decays exponentially. 
\begin{lemma}
\label{lem:hankel-decay-alt}
Let $\sigma_j$ be the top $j^{th}$ eigenvalue of $Z$. Then we have that 
\[ \sigma_j \leq \Gamma c^{-j/\log(L)} \]
where $c = e^{\pi^2/4} \sim 11.79$ and $\Gamma = 235200$ is an absolute constant.
\end{lemma}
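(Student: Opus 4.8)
The plan is to mirror the proof of Lemma~\ref{lem:hankel-decay} (Lemma~E.3 of \cite{hazan2017learning}), whose argument uses nothing about the matrix beyond the facts that it is real, positive semidefinite, and Hankel, of dimension $L$, together with a bound on its operator norm. First I would record the two structural facts: $Z = \int_{-1}^{1}\mu(\alpha)\mu(\alpha)^{\top}\,d\alpha$ is PSD, being an integral of rank-one PSD matrices, and it is Hankel, since by the computation above $Z(i,j)$ depends only on $i+j$. So the general phenomenon of exponential singular-value decay for real PSD Hankel matrices applies, and the remaining work is to extract the stated constants.

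Concretely I would exhibit the low displacement rank of $Z$. Writing $p(m) = (m+3)(m-1)(m+1)$, the closed form gives the identity $p(i+j)\,Z(i,j) = 8\big((-1)^{i+j}+1\big)$ for all $i,j\in[L]$, i.e. $p(\mathcal{D})(Z) = 8\,\mathbf{1}\mathbf{1}^{\top} + 8\,(P\mathbf{1})(P\mathbf{1})^{\top}$, where $\mathcal{D}(M) = DM + MD$ with $D = \mathrm{diag}(1,\dots,L)$ and $P = \mathrm{diag}((-1)^i)$; in particular $p(\mathcal{D})(Z)$ has rank at most $2$. Factoring $p(\mathcal{D}) = (L_D + R_{D-I})(L_D + R_{D+I})(L_D + R_{D+3I})$ into three commuting operators, each of the form $M\mapsto DM + M(D+\tau I)$ with $\tau\in\{-1,1,3\}$, the coefficient matrices $D$ and $D+\tau I$ are diagonal (hence normal) with spectra in $[1,L]$ and $[0,L+3]$, and in each case $D$ and $-(D+\tau I)$ lie in disjoint real intervals of length $O(L)$. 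I would then invoke the Beckermann--Townsend machinery, as in \cite{hazan2017learning}: the decay of the $k$-th singular value is controlled by the $k$-th Zolotarev number $Z_k(E,F)$ for two such intervals, which obeys $Z_k(E,F) = \exp\!\big(-\Theta(\pi^2 k/\log L)\big)$. Tracking the displacement rank ($\le 2$) and the cubic degree through this estimate, exactly as in the appendix of \cite{hazan2017learning}, should give $\sigma_j(Z)/\sigma_1(Z) \le (\Gamma/2)\,c^{-j/\log L}$ with $c = e^{\pi^2/4}$, with a comfortable margin since this rate and the constant $\Gamma = 235200$ are both loose.

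Finally I would bound $\sigma_1(Z) = \|Z\|$: by the first part of Lemma~\ref{lem:mu-props-alt}, $\|Z\| \le \mathrm{Tr}(Z) = \int_{-1}^{1}|\mu(\alpha)|^2\,d\alpha \le 2$, and then collect constants into $\Gamma = 235200$; the slack in the Zolotarev step absorbs the change from $\|Z\|\le 1/2$ in Lemma~\ref{lem:hankel-decay} to $\|Z\|\le 2$ here, as well as the different denominators. The main obstacle is precisely this constant-sensitive middle step: verifying the displacement identity for the particular entries of $Z$ and pushing the exact Zolotarev/ADI estimate through for the intervals it produces so as to land on $e^{\pi^2/4}$ and $235200$; everything else is routine. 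If one is content to be non-self-contained, an alternative is to cite a black-box PSD-Hankel decay bound — for instance the $n\times n$ estimate of Beckermann--Townsend, whose rate $\sim\exp(-\pi^2 j/(2\log(n/2)))$ already dominates $c^{-j/\log L}$ — after which only the trace bound above is needed.
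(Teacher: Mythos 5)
Your proposal is essentially correct and, as far as one can tell, coincides with what the paper intends: the paper gives no proof of this lemma at all --- it is stated with the remark that its proof is ``similar to'' Lemma~\ref{lem:hankel-decay}, which is itself imported from \cite{hazan2017learning} without proof. The substantive content you supply is exactly what that deferral presupposes: the generic exponential-decay bound for real PSD Hankel matrices applies verbatim once one checks that $Z=\int_{-1}^{1}\mu(\alpha)\mu(\alpha)^{\top}\,d\alpha$ is real, PSD, and Hankel (immediate from the integral form and the closed-form entries) and bounds $\|Z\|\le\operatorname{Tr}(Z)=\int_{-1}^{1}|\mu(\alpha)|^{2}\,d\alpha\le 4/3$ using the first part of Lemma~\ref{lem:mu-props-alt} (your bound of $2$ is fine too). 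Two caveats. First, your ``concrete'' middle step --- factoring the cubic displacement operator $p(\mathcal{D})$ into three Sylvester operators and feeding the rank-$2$ identity to the Beckermann--Townsend/Zolotarev bound --- is not directly licensed by that machinery, which controls singular values under a \emph{first-order} Sylvester displacement $AX-XB=(\text{low rank})$ with normal coefficients and separated spectra; a low-rank condition on $p(\mathcal{D})(Z)$ for a degree-three $p$ is a higher-order displacement structure and would require a separate argument. The black-box route in your final sentence (real PSD Hankel $\Rightarrow$ exponential spectral decay) is the standard argument, is what \cite{hazan2017learning} actually invokes, and is the one to keep; the displacement detour is unnecessary and is the weakest link if presented as the main line. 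Second, on constants: the paper silently reuses $\Gamma=235200$ even though $\operatorname{Tr}(Z)$ here exceeds that of the $[0,1]$ Hankel matrix of Lemma~\ref{lem:hankel-decay} (at most $4/3$ versus $1/2$); you flag that slack absorbs this but do not verify it, and neither does the paper, so the constant should be regarded as loose in both statements.
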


We now move towards proving Theorem \ref{thm:RepTheoremAlternate}. Consider the following calculation for the LDS sequence $y_t^{\mathrm{LDS}}$
\begin{align*}
    y_t^{\mathrm{LDS}} = \sum_{i=0}^{T} CA^{i}B u_{t-i} + Du_{t}, 
\end{align*}
and therefore we have that
\begin{align*}
    y_t^{\mathrm{LDS}} - y_{t-2}^{\mathrm{LDS}} = (CB+D)u_t + CABu_{t-1} - Du_{t-2} + \underbrace{\sum_{i=0}^{T} C (A^{i+2} - A^i) B u_{t-2-i}}_{\mathrm{Term\;of\;Interest}}
\end{align*}
For any $t_1 \geq t_2$ we define the matrix $\bar{U}_{t_1:t_2} \in \reals^{\dout \times t_1 - t_2 + 1}$ whose $i^{th}$ column is the input vector $u_{t_1 - i + 1}$. We allow $t_2$ to be negative and by convention assume $u_{t} = 0$ for any $t \leq 0$. Denote the diagonal entries of $A$ by $\{\alpha_l\}_{l=1}^{d_h}$, i.e. $\alpha_l = A(l,l)$. The term of interest above can then be written as 
\begin{align*}
    \sum_{i=0}^{L} C (A^{i+2} - A^i) B u_{t-2-i} &= \sum_{l=1}^{d_h} (c_l \otimes b_l) \left( \sum_{i=0}^{L} (\alpha_l^{i+2} - \alpha_l^i) u_{t-2-i} \right) \\
    &= \sum_{l=1}^{d_h} (c_l \otimes b_l) \left( \sum_{i=0}^{L} (\alpha_l^2 - 1)\alpha_l^i u_{t-2-i} \right) \\ 
    &= \sum_{l=1}^{d_h} (c_l \otimes b_l) \left( \bar{U}_{\{t-2:t-1-L\} } \mu(\alpha) \right).
\end{align*}
Therefore we get that
\begin{equation*}
\begin{aligned}
    y_t^{\mathrm{LDS}} - y_{t-2}^{\mathrm{LDS}} &= (CB+D)u_t + CABu_{t-1} - Du_{t-2} +  \sum_{l=1}^{d_h} (c_l \otimes b_l) \left( \bar{U}_{\{t-2:t-1-L\} } \mu(\alpha) \right).
\end{aligned}
\end{equation*}

Recall that we defined the sequence $\{\sigma_k, \phi_k\}_{k=1}^L$ to be the eigenvalue and eigenvector pairs for the Hankel matrix $Z$. For any $\alpha$ we define the projection of $\mu(\alpha)$ on the top $k$ eigenvectors as $\tilde{\mu}(\alpha)$, i.e. $\tilde{\mu}(\alpha) = \sum_{k=1}^K (\mu(\alpha_l)^{\top} \phi_k) \phi_k$.  Further define STU parameters as follows
\[ M_1^u = CB + D, M_2^u = CAB, M_3^u = -D\]
\[ M_k^{\phi} = \sum_{l} (\mu(\alpha_l)^{\top} \phi_k) \sigma_k^{-1/4} (c_l \otimes b_l) \]
The definition of STU prediction (using the above parameters) implies that the predicted sequence satisfies 
\begin{align*}
    y_t^{\mathrm{STU}} - y_{t-2}^{\mathrm{STU}} = (CB+D)u_t + CABu_{t-1} - Du_{t-2} &+ \sum_{l} (c_l \otimes b_l) \left( \bar{U}_{\{t-2:t-1-L\} } \right) \left( \underbrace{\sum_{k=1}^K (\mu(\alpha_l)^{\top} \phi_k) \phi_k}_{= \tilde{\mu}(\alpha)} \right). 
\end{align*}
Combining the above displays we get that 
\begin{align*}
    y_t^{\mathrm{LDS}} - y_t^{\mathrm{STU}} = y_{t-2}^{\mathrm{LDS}} - y_{t-2}^{\mathrm{STU}} &+ \sum_{l} (c_l \otimes b_l) \left( \bar{U}_{\{t-2:t-1-L\} } \right) \left( \mu(\alpha) - \tilde{\mu}(\alpha) \right).
\end{align*}
Using a similar derivation as in the proof of Theorem \ref{thm:RepTheoremMain} we get that

\[ \|y_t^{\mathrm{LDS}} - y_t^{\mathrm{STU}}\| \leq  \|y_{t-2}^{\mathrm{LDS}} - y_{t-2}^{\mathrm{STU}}\| + 10 \cdot \|B\|_{\text{col}} \cdot \|C\|_{\text{col}} \cdot L^{1.5} \cdot a \cdot \sqrt{\sum_{i=K+1}^L \sigma_i} \]
Applying the above equation recursively and Lemma \ref{lem:hankel-decay-alt} we get that for any $K \geq \log(L)$ 
\[ \|y_t^{\mathrm{LDS}} - y_t^{\mathrm{STU}}\| \leq  5 \cdot \|B\|_{\text{col}} \cdot \|C\|_{\text{col}} \cdot L^{2.5} \cdot a \cdot \sqrt{\sum_{i=K+1}^L \sigma_i} \leq c \cdot \|B\|_{\text{col}} \cdot \|C\|_{\text{col}} \cdot L^{3} \cdot a \cdot e^{\left(-\frac{\pi^2}{4} \cdot \frac{K}{\log(L)} \right)}. \]
where $c = 2.5 \times \Gamma \leq 10^6$ is an absolute constant. This finishes the proof of the theorem.

\section{Experiment Details}
\label{sec:exp_details}
\subsection{Synthetic Experiments with a marginally-stable LDS}

The random system we generated for the experiments displayed in Figure \ref{fig:training_loss_synth_1} is as follows - 
\[ A = \begin{bmatrix}
  -0.9999 & 0. & 0. & 0.\\
  0. & 0.9999 & 0. & 0.\\
  0. & 0. & -0.9999 & 0.\\
  0. & 0. & 0. & 0.9999\\
\end{bmatrix}, \qquad B = \begin{bmatrix}
  0.36858183 & -0.34219486 & 0.1407376\\
  0.18933886 & -0.1243964 & 0.21866894\\
  0.14593862 & -0.5791096 & -0.06816235\\
  -0.3095346 & -0.21441863 & 0.08696061\\
\end{bmatrix}\]

\[ C = \begin{bmatrix}
  0.5528727 & -0.51329225 & 0.21110639 & 0.2840083\\
  -0.18659459 & 0.3280034 & 0.21890792 & -0.8686644\\
  -0.10224352 & -0.46430188 & -0.32162794 & 0.1304409\\
\end{bmatrix}, \qquad D =\begin{bmatrix}
  1.5905786 & 0. & 0.\\
  0. & -0.45901108 & 0.\\
  0. & 0. & 0.3238576\\
\end{bmatrix}\]

\paragraph{Hyperparameters for STU:} We only tuned the learning rate in the set ($[5e-2, 1e-1, 5e-1, 1, 5, 10]$) for vanilla STU and used $K=25$.

\paragraph{Hyperparameters for LRU:} 
\begin{itemize}
    \item \textbf{Model Hyperparameters} \cite{orvieto2023resurrecting} provide a few recommendations for the LRU model. We tested exhaustively over the following hyperparameter choices: 
    \begin{itemize}
     \item Stable Exp-parameterization: We searched over [True, False]
     \item Logarithmic Representation of Recurrent Parameters: We searched over [True, False]
     \item $\gamma$-Normalization: We searched over [True, False]
     \item Ring Initialization: We searched over \texttt{min\_rad}$\in \{0.0, 0.9, 0.99, 0.999\}$ and \texttt{max\_rad}$\in \{0.9, 0.99, 0.999, 1.0\}$.
     \item Setting the \texttt{max\_init\_phase}$\in \{1.57, 3.14, 6.28\}$
    \end{itemize}
    
    We found the Stable Exp-parameterization, Logarithmic Representation of Recurrent Parameters and $\gamma$-normalization to be essential for training in this problem. We did not observe any particular benefit of Ring Initialization or reducing the phase at initialization and we set them to defaults eventually. We provide the learning curves over our search space in Figure \ref{fig:stu-wide-hparam}.
    \item \textbf{Optimization Hyperparameters} Given the comparatively higher sample complexity of the LRU model we employed standard deep-learning optimization tricks like tuning weight-decay as well as applying a cosine learning rate schedule with warmup. These optimization tricks did not lead to gains over standard training with Adam and a fixed learning rate in this problem. We tuned the learning rate in the set ($[5e-2, 1e-1, 5e-1, 1, 5, 10]$).  
\end{itemize}

\begin{figure}[H]
    \centering
    \includegraphics[width=.7\linewidth]{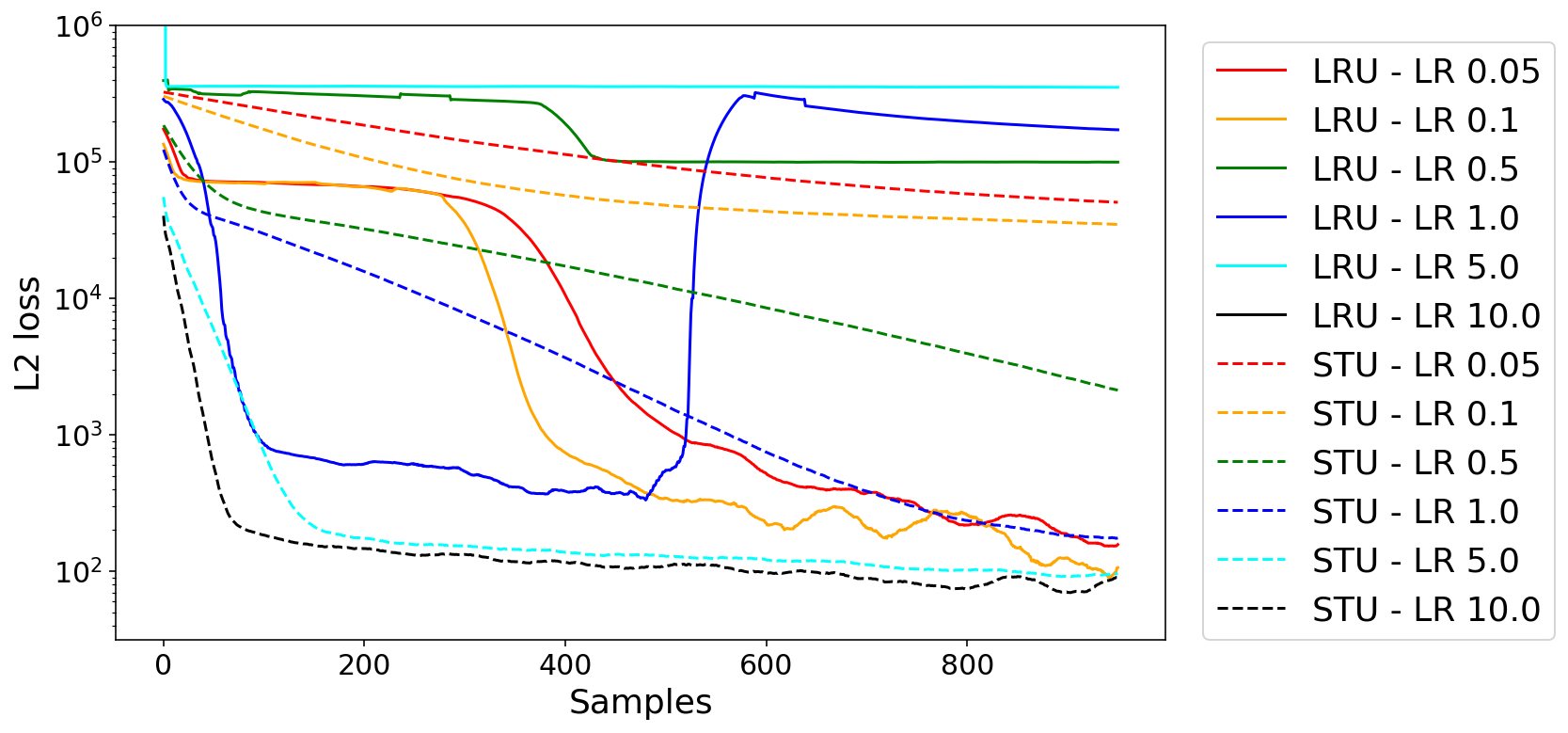}
    \caption{(Smoothed) Learning curves for learning a marginally stable LDS for a single STU layer (dashed) vs a single LRU layer (solid). Different colors represent different learning rates highlighting that the training becomes unstable for LRUs quickly as LR increases while the STU trains at much higher learning rates. Curiously at stable LRs we observe that LRUs show a platea-ing of learning for a large fraction of the training time.  \label{fig:training_loss_synth_lr_curves}}
\end{figure}

\begin{figure}[H]
    \centering
    \includegraphics[width=.7\linewidth]{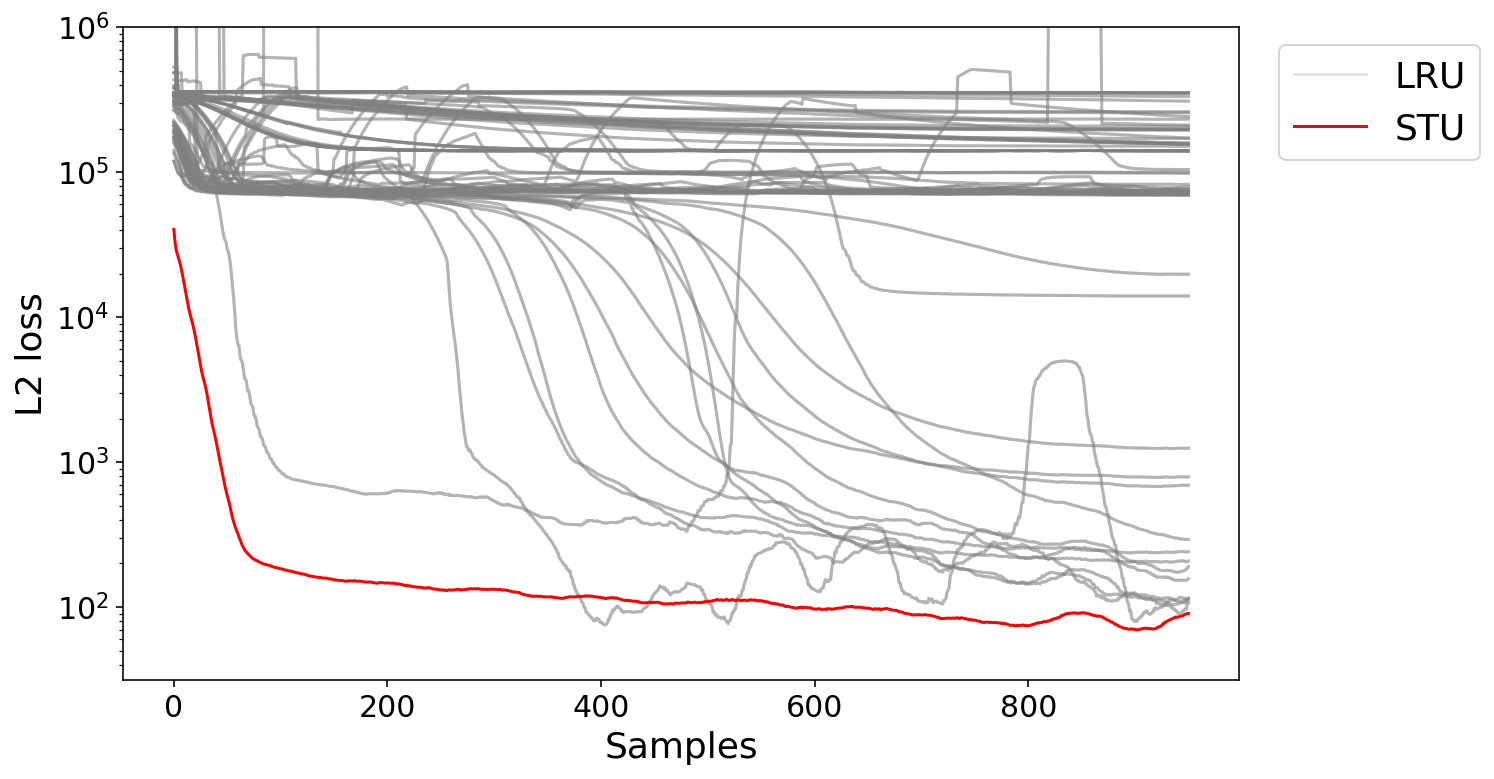}
    \label{fig:training_loss_synth_hparam_search}
    \caption{LRU Hparam search vs STU. All the gray curves represent the hyperparameters for LRU we tried. The STU curve is the best taken from Figure \ref{fig:training_loss_synth_lr_curves}. For LRU we searched over choices of enabling stable exp-parameterization, gamma-normalization, ring-initialization, phase-initialization, learning rate, weight decay and constant vs warmup+cosine decay lr schedule.}
    \label{fig:stu-wide-hparam}
\end{figure}

\subsection{Experimental setup for LRA experiments}

Our training setup closely follows the experimental setup used by \cite{orvieto2023resurrecting}. We use the same batch sizes and training horizons for all the tasks as employed by \cite{orvieto2023resurrecting}.  

\textbf{Hyperparameter tuning} For all of our experiments on the LRA benchmark for both the vanilla STU model and the auto-regressive AR-STU model we searched the learning rate in the set $\{1e-4, 3e-4, 5e-4, 1e-3, 2.5e-3, 5e-3\}$ and tune the weight decay in the set $\{1e-3, 1e-2, 1e-1, 5e-1, 1.0\}$. We fix the number of filters $K$ to be $24$. We use Adam as the training algorithm with other optimization hyperparameters set to their default values. We use the same learning rate schedule as \cite{orvieto2023resurrecting}, i.e. 10\% warmup followed by cosine decay to 0. For the AR-STU model we searched over two values of $k_y \in \{2, 32\}$. In Table \ref{tab:stu_perf_detailed} we present a comparison of vanilla STU with AR-STU with $k_y=2$ and AR-STU with $k_y=32$. We find that both vanilla STU and AR-STU $k_y=2$ reach comparable accuracy which is better than the baselines S4 and LRU on non-image datasets. On image datasets we found $k_y=32$ to be helpful in getting better test accuracies. 

\textbf{Initialization} For the STU model we initialized \textbf{all} the $M$ matrices at 0. 

Finally while training the AR-STU model as employed by the training setup of \cite{orvieto2023resurrecting} and previous SSM implementations, we found that using a smaller value of LR specifically for $M^y$ matrices to be useful. We decreased the value of LR by a factor $0.1$ or $0.05$ and searched over this parameter. 

\begin{table*}[t]
    \def\arraystretch{1.5}
\begin{center}
\begin{tabular}{ |c|c|c|c|c|c|c| } 
 \hline
 & \textbf{CIFAR} & \textbf{ListOps} & \textbf{Text} & \textbf{Retrieval} & \textbf{Pathfinder} & \textbf{PathX} \\
 \hline 
 S4 \cite{gu2021efficiently} & 88.65 &	59.60	& 86.82 & 90.90	&94.20&	96.35\\ 
 \hline 
 LRU \cite{orvieto2023resurrecting}	&89 	&60.2 	&89.4 	&89.9 	&95.1 &	94.2 	\\
 \hline
   \hline
        STU &	83.73 &	61.04 &	\textbf{90.48} & 90.40 & 91.70 &	89.71 \\
  \hline
   AR-STU ($k_y=2$) &	86.56 &	\textbf{61.14} & \textbf{90.47} & 90.52 & 93.85 & 90.49 \\
   \hline
      AR-STU ($k_y=32$) &	\textbf{91.34} &	57.66 &	88.51 & 87.39 & \textbf{95.45} &	93.24 \\
   \hline
\end{tabular}
\end{center}
\caption{Comparison of the STU model against various proposed SSM models on the LRA benchmark: Bold values indicate the best for that task. We find that STU is competitive across all the workloads without the need for carefully designed initializations, discretizations or normalizations. We report the median over 5 trials for our experiments.}
\label{tab:stu_perf_detailed}
\end{table*}

\section{Power of Auto-regression: Dimension-dependent representation for LDS}
\label{sec:auto-reg}
In this section we give a short proof that any partially-observed LDS can be perfectly predicted via a linear predictor acting over at most $d$ of its past inputs and outputs where $d$ is the hidden-state dimensionality (i.e. $A \in \reals^{d \times d}$). In particular 
\begin{theorem}
    Given an LDS parameterized by $A \in \reals^{d \times d},B,C,D$, there exist coefficients $\alpha_{1:d}$ and matrices $\Gamma_{0:d}$ such that given any input sequence $u_{1:L}$, the output sequence $y_{1:L}$ generated by the action of the LDS on the input satisfies for all $t$
\[ y_t = \sum_{i=1}^d \alpha_i y_{t-i} + \sum_{i=0}^d \Gamma_i u_{t-i}\]
\end{theorem}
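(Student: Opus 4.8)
The plan is to invoke the Cayley--Hamilton theorem. Let $\chi_A(\lambda) = \lambda^d - \alpha_1\lambda^{d-1} - \cdots - \alpha_d$ be the characteristic polynomial of $A$; this defines the coefficients $\alpha_{1:d}$. Cayley--Hamilton gives $A^d = \sum_{i=1}^d \alpha_i A^{d-i}$, and left-multiplying by $A^{k-d}$ yields the key identity $A^k = \sum_{i=1}^d \alpha_i A^{k-i}$ for every integer $k \ge d$.

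Next I would write the LDS output in impulse-response form. Unrolling \eqref{eqn:LDS} from zero initial state gives $y_t = \sum_{j=0}^{t} h_j u_{t-j}$ with $h_0 = D$ and $h_j = CA^{j-1}B$ for $j \ge 1$, adopting the standard convention $u_s = 0$ for $s \le 0$ (equivalently $x_0 = 0$). I would then form the residual $r_t \defeq y_t - \sum_{i=1}^d \alpha_i y_{t-i}$, substitute the impulse-response expansion into all terms, and swap the order of summation to collect the matrix coefficient multiplying each $u_{t-m}$. For $m \ge d+1$ this coefficient equals $C\bigl(A^{m-1} - \sum_{i=1}^d \alpha_i A^{m-1-i}\bigr)B$, which vanishes by the key identity since $m-1 \ge d$ (and the exponents $m-1-i$ are all nonnegative). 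For $0 \le m \le d$ the coefficient is a fixed matrix $\Gamma_m$ assembled from $h_0,\dots,h_m$ and $\alpha_{1:d}$, independent of $t$. Hence $r_t = \sum_{m=0}^d \Gamma_m u_{t-m}$, which is exactly the claimed identity; the boundary cases of small $t$ are handled automatically by the zero-padding convention.

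An equivalent and slightly slicker route uses the transfer function $G(z) = D + zC(I - zA)^{-1}B$. Since $\det(I-zA)$ is a polynomial of degree at most $d$ with constant term $1$ and $\mathrm{adj}(I-zA)$ has entries of degree at most $d-1$, the product $\det(I-zA)\cdot G(z)$ is a matrix polynomial of degree at most $d$; reading off the coefficient of $z^t$ in the formal identity $\det(I-zA)\,\hat y(z) = \bigl(\det(I-zA)\,G(z)\bigr)\hat u(z)$ recovers the same recurrence, with $\alpha_{1:d}$ coming from $\det(I-zA)$ and $\Gamma_{0:d}$ from the numerator polynomial. I expect no genuine obstacle here: the statement is classical, and the only points requiring care are the index bookkeeping in the convolution and the treatment of the initial condition / negative-time indices, for which I would simply fix the zero-initialization convention already implicit in the STU construction.
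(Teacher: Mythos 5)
Your proposal is correct and follows essentially the same route as the paper: unroll the LDS into its impulse-response (convolutional) form, apply Cayley--Hamilton to the characteristic polynomial of $A$, and observe that the coefficient of every input lagged by more than $d$ steps contains the factor $p(A)=0$, leaving only the first $d{+}1$ input terms to define $\Gamma_{0:d}$. The only difference is a harmless off-by-one in your impulse-response convention ($h_j = CA^{j-1}B$ versus the paper's $CA^jB$ with $h_0 = CB+D$), which changes the $\Gamma$'s but not the argument.
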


\begin{proof}
By unrolling the LDS we have that $y_t = \sum_{i=0}^t  C A^i B u_{t-i} + Du_t.$. By the Cayley Hamilton theorem, the matrix $A$ has a characteristic polynomial $p$ of degree $d$, namely there exists $d$ numbers $c_{1:d}$ such that 
$$ p(z) = \sum_{i=0}^d c_i z^i   $$
satisfies $p(A) = 0$. Without loss of generality we can assume the constant term in the polynomial is 1. We can now consider the series for $y_t,y_{t-1},... $ as
\begin{eqnarray*}
\begin{matrix}
y_t - Du_t & = &  CB u_t &  CA B u_{t-1}  & ... &  CA^t B u_1 \\
y_{t-1} - Du_{t-1} & = & 0  &  CB u_{t-1} &   ...  &  CA^{t-1} B u_{1}   \\
\vdots \\
y_{t-d} - Du_{t-d} & = & 0 & 0  & ... &  CA^{t-d} B u_1 \\
\end{matrix}
\end{eqnarray*}
Now, if we take the combination of the above rows according to the coefficients of the characteristic polynomial, we get that 
\begin{equation}\label{eqn:coeffs}
    \sum_{i=0}^{d} c_i y_{t-i} = \sum_{j=0}^t R_j + \sum_{i=0}^{d} D u_{t-i}
\end{equation}
where $R_j$ is the appropriate sum along the $j'th$ column of the matrix above. For all $j > d$, this amounts to an expression of the form:
$$ j > d \ \ \Rightarrow \ \  R_j = \sum_{i=0}^d c_i C A^i \cdot A^{t-j} B u_{t-j} = C ( \sum_{i=0}^d c_i A^i ) \cdot A^{t-j} B u_{t-j} = C \cdot p(A)\cdot A^{t-j} B u_{t-j} = 0. $$
Since all but the first $d$ columns are zero, rearranging \eqref{eqn:coeffs} and collecting terms, we get that there exists coefficients $\alpha_{1:d}$ and matrices $\Gamma_{0:d}$ such that 
$$ y_t = \sum_{i=1}^d \alpha_i y_{t-i} + \sum_{j=0}^d \Gamma_j u_{t-j} . $$
\end{proof}

\end{document}